\documentclass{article} % For LaTeX2e

% Optional math commands from https://github.com/goodfeli/dlbook_notation.
%%%%% NEW MATH DEFINITIONS %%%%%

\usepackage{amsmath,amsfonts,bm}

% Mark sections of captions for referring to divisions of figures

% Highlight a newly defined term

% Figure reference, lower-case.

% Figure reference, capital. For start of sentence

% Section reference, lower-case.

% Section reference, capital.

% Reference to two sections.

% Reference to three sections.

% Reference to an equation, lower-case.
\def\eqref#1{equation~\ref{#1}}
% Reference to an equation, upper case

% A raw reference to an equation---avoid using if possible

% Reference to a chapter, lower-case.

% Reference to an equation, upper case.

% Reference to a range of chapters

% Reference to an algorithm, lower-case.

% Reference to an algorithm, upper case.

% Reference to a part, lower case

% Reference to a part, upper case

\def\floor#1{\lfloor #1 \rfloor}
\def\1{\bm{1}}

% Random variables

% rm is already a command, just don't name any random variables m

% Random vectors

% Elements of random vectors

% Random matrices

% Elements of random matrices

% Vectors

% Elements of vectors

% Matrix

% Tensor
\DeclareMathAlphabet{\mathsfit}{\encodingdefault}{\sfdefault}{m}{sl}
\SetMathAlphabet{\mathsfit}{bold}{\encodingdefault}{\sfdefault}{bx}{n}

% Graph

% Sets

% Don't use a set called E, because this would be the same as our symbol
% for expectation.

% Entries of a matrix

% entries of a tensor
% Same font as tensor, without \bm wrapper

% The true underlying data generating distribution

% The empirical distribution defined by the training set

% The model distribution

% Stochastic autoencoder distributions

 % Laplace distribution

% Wolfram Mathworld says $L^2$ is for function spaces and $\ell^2$ is for vectors
% But then they seem to use $L^2$ for vectors throughout the site, and so does
% wikipedia.

 % See usage in notation.tex. Chosen to match Daphne's book.

\DeclareMathOperator*{\argmin}{arg\,min}

\usepackage{fullpage}
\usepackage{amsmath} 
\usepackage{natbib}

\usepackage{enumitem}

\usepackage[utf8]{inputenc} % allow utf-8 input
\usepackage[T1]{fontenc}    % use 8-bit T1 fonts
\usepackage{hyperref}       % hyperlinks
\usepackage{url}            % simple URL typesetting
\usepackage{booktabs}       % professional-quality tables
\usepackage{amsfonts}       % blackboard math symbols
\usepackage{nicefrac}       % compact symbols for 1/2, etc.
\usepackage{microtype}      % microtypography
\usepackage{xcolor}         % colors

\DeclareMathOperator*{\Price}{\text{Price}}
\DeclareMathOperator*{\DefectedPrice}{\text{DefectedPrice}}

\DeclareMathOperator*{\AverageProfitSet}{I^{\texttt{MP}}}

\usepackage{amsthm}
\newtheorem{theorem}{Theorem}[section]
\newtheorem{lemma}[theorem]{Lemma}

\newtheorem{definition}[theorem]{Definition}

\newtheorem{proposition}[theorem]{Proposition}

\theoremstyle{remark}
\newtheorem{remark}{Remark}

\renewcommand{\paragraph}{\vspace{0.25cm} \noindent \textbf}

\title{Breaking Algorithmic Collusion in Human-AI Ecosystems}

\author{
  Natalie Collina\textsuperscript{1}, 
  Eshwar Ram Arunachaleswaran\textsuperscript{1}, 
  Meena Jagadeesan\textsuperscript{2} \\[6pt]
  \textsuperscript{1}University of Pennsylvania \\
  \textsuperscript{2}Stanford University
}

\begin{document}

\maketitle

\begin{abstract}
AI agents are increasingly deployed in ecosystems where they repeatedly interact not only with each other but also with humans. In this work, we study these human-AI ecosystems from a theoretical perspective, focusing on the classical framework of repeated pricing games. In our stylized model, the AI agents play equilibrium strategies, and one or more humans manually perform the pricing task instead of adopting an AI agent, thereby defecting to a no-regret strategy. Motivated by how populations of AI agents can sustain supracompetitive prices, we investigate whether high prices persist under such defections. Our main finding is that even a single human defection can destabilize collusion and drive down prices, and multiple defections push prices even closer to competitive levels. We further show how the nature of collusion changes under defection-aware AI agents. Taken together, our results characterize when algorithmic collusion is fragile—and when it persists—in mixed ecosystems of AI agents and humans.

\end{abstract}

\section{Introduction}

As AI agents become capable of  autonomously making decisions, humans are increasingly deploying AI agents to carry out tasks (e.g., pricing decisions, job applications, and content creation) on their behalf. The growing adoption of AI agents is giving rise to decentralized ecosystems where multiple deployed AI agents interact with each other at test-time \citep{rothschild2025agenticeconomy}, leading to rich emergent phenomena \citep{hammond2025multiagentrisksadvancedai}. 

When many sellers in a market adopt AI agents for pricing decisions, the pricing agents deployed by different sellers implicitly compete with each other over repeated interactions. A major concern is that these 
multi-agent interactions can drive \textit{algorithmic collusion}. For example, RL agents can implicitly learn to collude with each other over repeated interactions: agents implement threat-based strategies where agents force each other to take specific actions by threatening retaliation at future time steps \citep{calvano2020artificial}. As another example, LLM agents can be steered towards collusion via prompting \citep{fish2025algorithmiccollusionlargelanguage} and can secretly share proprietary information with each other \citep{motwani2025secretcollusionaiagents}. 

However, while these studies show how collusion emerges from interactions between AI agents, it is rare in practice for adoption to fully span across the market. Specifically, some sellers may opt to perform pricing decisions manually, for example due to a lack of trust in algorithmic pricing tools \citep{bertini2021pitfalls} or due to regulatory pressure to avoid collusion \citep{hartline2024regulation}. This leads AI agents to compete not only with each other, but also with simple heuristic strategies deployed by humans. The resulting heterogeneity in these human-AI ecosystems has been empirically observed to shape the level of collusion that can be sustained \citep{Werner2024, SchauerSchnurr2023, normann2022human}, raising the question of when algorithmic collusion is robust to interactions between AI agents and humans.

In this work, we study algorithmic collusion in human-AI ecosystems from a theoretical perspective, building on the classical framework of \textit{repeated Bertrand pricing games} \citep{fudenberg1991game}.  In these games, players repeatedly sell a good, and the player who offers the lowest price at each round wins and gains the profit of that sale. Collusion traces back to a fundamental property of repeated games: when players (e.g., AI agents) optimize for their long-run rewards, a vast set of pricing outcomes---including the high prices---can be realized at equilibrium. We specifically study the impact of some humans manually performing the pricing task---deploying a simple heuristic strategy rather than adopting an AI agent---which we refer to as a \textit{defection}. At first glance, one might expect these defections to undercut high prices, eventually driving the prices back down to the competitive level and thus breaking collusion; however, this intuition fails in 2-player pricing games, where high prices can be sustained even in the presence of  simple heuristic strategies \citep{feldman2016correlated, arunachaleswaran2025threats}.

To formally study when high prices persist under defections, we build on the following stylized model of $N$-player pricing games (Section \ref{sec:model}). Borrowing from classical equilibrium concepts \citep{fudenberg1991game}, 
we model AI agents as choosing strategies that constitute a Nash equilibrium in the repeated game.\footnote{All of our results also readily generalize to a weaker notion of equilibrium, which only requires an AI agent's strategy to achieve utility as high as a human defection (Section \ref{sec:adoption}). Our results for multiple defectors place no assumptions on the AI agents' strategies (Section \ref{subsec:mnoregret}).} This is motivated by how standard ML pipelines encourage agents to optimize for their long-run rewards, for example during training of RL agents (Appendix \ref{subsec:RLagents}) or during prompt specification for LLMs (Appendix \ref{subsec:LLMagents}). 
We model defections as following a generic \textit{no-regret strategy}, which captures simple strategies such as greedy behavior with minor hedging that can approximate human behavior \citep{nekipelov2015econometricslearningagents}. 

Our main finding is that a single human defection can already drive the market price to be low, though not necessarily all the way down to the competitive price.
More specifically, in any $N$‐player algorithmic pricing equilibrium, if a ``typical" player---one who attains at most the median profit in the system---defects, this drives the market price down by a polynomial factor in $N$. Moreover, if multiple humans defect, the market price falls exponentially in the number of defectors regardless of the identities of the players.

Our analysis tightly characterizes the market price under defections to no-regret strategies (Section \ref{sec:noregret}). We show that any median-profit player defecting to a no-regret strategy always drives the time-averaged market price to at most $\frac{\ln(N)+1}{N}$ (Theorem \ref{thm:limit}). Furthermore, we construct an equilibrium which shows that this bound is asymptotically tight (Theorem \ref{thm:robust}).
Together, these results illustrate that the stability of collusive high prices is inversely related to the number of competitors, making collusion harder to sustain in larger markets. When $M$ arbitrary players defect to no-regret strategies, we show that the market price reduces to $\frac{M}{e^{M-1}}$ (Theorem \ref{thm:mnoregret}), and again show this bound is tight. At a high level, these reductions in market price stem from a core property of no-regret strategies in pricing games: a no-regret strategy captures a nontrivial share of the total market utility that is independent of the number of competitors (Lemma \ref{lem:equal_looting}). This guarantee fundamentally limits the ability of other agents to sustain collusive high prices.

To gain further insight into the market price in mixed human-AI ecosystems, we consider two model extensions (Section \ref{sec:extensions}). First, we show that our analysis readily generalizes to a weaker equilibrium notion where players only need to prefer their strategy over a no-regret strategy, rather than over arbitrary strategies (Section \ref{sec:adoption}). We then consider defection-aware AI agents who can dynamically adapt their strategies based on which humans defect. This defection-awareness fundamentally changes our results: the market price can remain high in the presence of the defection, regardless of the number of players $N$ (Section \ref{sec:override-aware}). 

Taken together, our results demonstrate how algorithmic collusion can break down when even a very small number of humans defect to simple strategies. This provides a theoretical perspective on how emergent behaviors in human-AI ecosystems differ from those in ecosystems with only AI agents. More broadly, our work highlights how evaluators should thus account not only for interactions between AI agents, but also for interactions between these agents with humans.

\subsection{Related work} Our work contributes to research threads on algorithmic collusion and no-regret dynamics in pricing games.

\paragraph{Algorithmic Collusion.}  The potential for algorithms to learn collusive pricing strategies is an increasingly pressing concern. Early work demonstrated that RL agents can learn to sustain supracompetitive prices in simulated environments, typically by learning implicit punishment mechanisms \citep{calvano2020artificial, klein2021autonomous}. Collusion has also been empirically observed in real-world markets, where  pricing algorithms are associated with elevated prices \citep{abada2023artificial, assad2024algorithmic, wieting2021algorithms}. Recent work also highlights the possibility of LLM agents exhibiting secret collusion, in which coordination is intentionally concealed from monitors \citep{motwani2025secretcollusionaiagents}. While the specific definition of collusion varies across works, a common property of many of these definitions is that collusion leads to sustained high prices. Our analysis illustrates how any form of collusion satisfying this property is fragile to defections.

Prior work offers several possible explanations of algorithmic collusion. One explanation is that the equilibrium strategies emerging in a repeated game support collusion \citep{calvano2020artificial, klein2021autonomous, Salcedo2015PricingAA, arunachaleswaran2025threats}, for example via threats based on the opponent's past behaviors. Another explanation is that pricing algorithms take as input the opponents' current price but possess commitment power in a repeated game since they cannot be updated frequently   
\citep{brown2023competition, lamba2022pricingalgorithms, Salcedo2015PricingAA}. 
Other explanations include predictive alignment across algorithms due to statistical or model-driven linkages \citep{banchio2023adaptive,jo2025homogeneousalgorithmsreducecompetition} and coupling effects in RL and bandit pricing \citep{hansen2021frontiers}. Our work builds on the perspective of equilibrium strategies emerging in a repeated game. 

A handful of recent works  study whether algorithmic collusion occurs when humans and AI agents interact with each other.
Like our work, \cite{normann2022human},  \cite{Werner2024} and \cite{ SchauerSchnurr2023} also focus on repeated interactions, but these works empirically study collusion in laboratory experiments.   \citet{normann2022human} and \citet{Werner2024} empirically demonstrate how increased adoption of AI agents leads to higher prices, and \citet{SchauerSchnurr2023} shows that prices in human–AI ecosystems can fall below those in all-human markets due to coordination challenges.   Our work takes a step towards providing a theoretical foundation for these empirical observations. We note that \cite{leisten2024algorithmic} also theoretically studies human-AI ecosystems:  however, while they study a pricing agent which chooses a price based on other players' prices \citep{brown2023competition, Salcedo2015PricingAA, lamba2022pricingalgorithms} in a static game,  we focus on pricing agents who chooses a price based on the history of past behaviors in a repeated game, which we believe better captures the realities of AI agents.   

\paragraph{No-regret dynamics in pricing games.} 
Our work builds on a growing line of work which investigates how no-regret dynamics interact with pricing games. \citet{nadav2010no} show that, when \emph{all} players employ no-regret strategies in a Bertrand pricing game, the resulting market price tends to the competitive price as the number of players increases. By contrast, we explore the interaction of no-regret agents with agents engaged in arbitrarily complex equilibrium strategies. \citet{hartline2024regulation} show that if all players employ no-swap-regret strategies, the market outcome is fully competitive, and propose auditing procedures for such guarantees. \citet{hartline2025regulation} derives minimal properties of regret-minimizing strategies that suffice to guarantee competitiveness. 

Our work also connects with learning dynamics in first-price auctions with a single item, which have an equivalence to Bertrand pricing games. For example, \citet{kolumbus2022auctions} analyze auctions between regret-minimizing agents, and   \cite{Deng_2022} and \cite{Feng2020} study dynamics of \emph{mean-based} no-regret strategies. Closely related to our work, \citet{feldman2016correlated} characterize coarse correlated equilibria in single-item auctions, which correspond to the outcomes of no-regret dynamics. They show that the market price scales down exponentially in the number of players. 
Our analysis in Theorem \ref{thm:mnoregret} and Lemma \ref{lem:equal_looting} builds on proof techniques in \cite{feldman2016correlated}.

Closely related to our work, \citet{arunachaleswaran2025threats} construct two-player  equilibria with a no-regret strategy that sustain high prices and show that mean-based no-regret dynamics converge to competitive prices. In contrast, our model focuses on defection to no-regret strategies rather than an equilibrium containing a no-regret strategy, and we prove upper bounds in this defection model. While the analysis in \citet{arunachaleswaran2025threats} implies a lower bound on the market price under defections for $N = 2$ players, our equilibrium constructions improve upon this bound, and also generalize to $N \ge 2$ players. Our analysis in Lemma \ref{lem:equal_looting} builds on proof techniques in \cite{arunachaleswaran2025threats}.

\section{Model}\label{sec:model}

We formalize the repeated game in Section \ref{subsec:pricinggame}, player strategies in Section \ref{subsec:strategies}, and the market price in Section \ref{subsec:marketprice}. In Appendix \ref{subsec:examples}, we discuss how our model captures key aspects of pricing ecosystems with RL agents (Appendix \ref{subsec:RLagents}) and with LLM agents (Appendix \ref{subsec:LLMagents}), and in Section \ref{sec:discussion}, we discuss model limitations.  

\subsection{Repeated pricing game}\label{subsec:pricinggame}

We consider a market based on the repeated pricing games \citep{fudenberg1991game}. The game consists of $N$ players, each of whom corresponds to either an AI agent who automatically performs the pricing task or to a human who manually performs the pricing task.  

In the standard one-shot (or static) version of this game, the $N$ players simultaneously choose a (possibly randomized) price over a discretized set $\{0, 1/K, \dots, 1\}$ where $K \ge 2$. The payoffs in this game capture how the player who offers the lowest price attracts all of the customer demand. More specifically, the payoffs are specified by the standard Bertrand rule, where the player who selected the lowest price gets payoff equivalent to that price, and all other players get no payoff. In the case of ties, the payoff is divided evenly. We assume that the cost is zero. 

To capture repeated interactions between players, we study the \emph{repeated} version of this game, played over $T$ rounds. In each round $t \in [T]$, every player $i \in [N]$ chooses a  \textit{(possibly randomized) price} $P_{i,t}$ over the price set. At the end of each round, a price $p_{i,t} \sim P_{i,t}$ is drawn for each player, and payoffs $u_{i,t}(p_{i,t}; p_{-i, t})$ are awarded according to the Bertrand rule.
Formally: 
\begin{itemize}[leftmargin=*, nosep]
    \item If exactly one player $j$ chooses the strictly lowest price $p = \min_{i\in [N]} p_{i,t}$, then the payoff of this player is $u_{j,t}(p_{j,t}; p_{-j, t}) = p$, and every other player $i \neq j$ gets payoff $u_{i,t}(p_{i,t}; p_{-i, t}) = 0$.
    \item If $I \subseteq [N]$ players are tied for the lowest price $p = \min_{i\in [N]} p_{i,t}$, each player $i \in I$ obtains a payoff of $u_{i,t}(p_{i,t}; p_{-i, t}) = \frac{p}{|I|}$, while all other players $i \not\in I$ receive payoff $u_{i,t}(p_{i,t}; p_{-i, t}) = 0$.
\end{itemize}
 We will slightly abuse notation and let $u_{i,t}(P_{i,t}, P_{-i,t}) = \mathbb{E}[u_{i,t}(p_{i,t}, p_{-i,t})]$ denote the expected payoff, where the expectation is over the realized prices $p_{j,t} \sim P_{j,t}$ for each player $j \in [N]$. We also let $\lceil p \rceil_{K}$ (resp. $\lfloor p \rfloor_{K}$) denote the rounding of $p$ to the closest higher (resp. lower) discretization of $\frac{1}{K}$.  

\subsection{Repeated Game Strategies, Equilibrium, and Defections}\label{subsec:strategies} 

We model player strategies as mapping histories to price distributions, which captures how each player can adapt to the surrounding environment but fixes their strategy ahead of time. We model the strategies chosen by AI agents as forming a Nash equilibrium in strategy space, which captures an equilibrium in the repeated game \citep{fudenberg1991game}, and we model humans performing the task manually (i.e., defections) as the player switching to a no-regret strategy \citep{nekipelov2015econometricslearningagents}. We note all of our results readily generalize to a much weaker notion of equilibrium---that we call an \textit{adoption equilibrium}---which only requires the AI agent's strategy to achieve utility at least as high as a no-regret strategy (Section \ref{sec:adoption}); our results for multiple defectors do not place any assumptions on the AI agents' strategies (Section \ref{subsec:mnoregret}).  

\paragraph{Strategies.} Each player $i$ has a repeated-game \emph{strategy} $S_i$ 
that determines a distribution over prices $P_{i,t}$ at every round $t$ based on the history of price distributions of all players and payoffs for player $i$ from round $1$ to $t-1$.\footnote{We formalize the history in Appendix \ref{app:model}.}  Let $\mathcal{S}(T)$ denote the strategy space of all possible mappings from histories to price distributions.  Let $\mathcal{S}_{1:N} = (S_1, \ldots, S_N)$ be the strategy profile of all players' strategies. Let $U_i(\mathcal{S}_{1:N}) = \frac{1}{T} \sum_{t=1}^T u_{i,t}(P_{i,t}, P_{-i, t})$ denote the expected average utility of player $i$ across all time steps $T$.

\paragraph{Equilibrium concept.} We focus on $\epsilon$-approximate equilibria. More formally, an $\epsilon$-approximate \textit{repeated game equilibrium} is an approximate Nash equilibrium in strategy space: a strategy profile $\mathcal{S}^* = (S_1^*, \ldots, S_N^*)$ is a repeated game equilibrium in a $T$-round game if, for all players $i$, it holds that $U_i(S_i^*, S_{-i}^*) \ge \max_{S_{i} \in \mathcal{S}(T)} U_{i}(S_{i}, S_{-i}^*) - \epsilon$. In other words, no player improves their expected utility by more than $\epsilon$ if they swap out their strategy for a different one, keeping the strategies of all other players fixed. We will typically take $\epsilon = \Theta(1/T)$. The set of $\Theta(1/T)$-approximate equilibria is very rich and can range anywhere from the lowest price of $0$ to the highest price of $1$ \citep{fudenberg1991game}. We discuss our rationale for focusing on approximate equilibria in Appendix \ref{app:model}.

\paragraph{Defections.} 
We will primarily focus on defection to a \textit{No-Regret (NR) Strategy $S^r$}, motivated by behavioral assumptions in prior work \citep{nekipelov2015econometricslearningagents}. A strategy $S^r$ has $r(T)$ regret for player $i$ if  $\left(\max_{p \in \left\{0, 1/K, \ldots, 1\right\}} \sum_{t=1}^T u_i(p, P_{-i, t}) \right)  - \left( \sum_{t=1}^T u_i(P_{i,t}, P_{-i, t}) \right) \le r(T)$  where $P_{i,t}$ is the output of $S^r$ and 
where the $P_{-i,t}$ come from any (consistent) sequence of histories. $S^r$ is \emph{no-regret} if $r(T) = o(T)$.  

We use the word defection to refer to an agent changing from their equilibrium strategy to a no-regret strategy. Defections represent a human performing a pricing task manually in place of their AI agent. It will sometimes be useful to discuss other types of strategies that an agent may switch to from equilibria; we refer to these as more general ``deviations".

\subsection{Market Price}\label{subsec:marketprice} 

The \textit{market price} induced by the players' strategies captures the market's level of uncompetitiveness. Given a strategy profile $\mathcal{S}_{1:N}$, the \textit{market price} 
\[
\smash{\Price(\mathcal{S}_{1:N}) = \frac{1}{T} \sum_{t=1}^T \mathbb{E}\left[ \min_{i \in [N]}(P_{i,t}) \right]}\] captures the average price set by the winning player at each round. We use a price of 0 as a benchmark for a perfectly competitive market, since any Nash equilibria in the static one-round game leads to a market price $\le 2/K$ for sufficiently large $K$, which approaches $0$ as $K \rightarrow \infty$. Since the market price equals the expected cumulative utility of the winning players\footnote{That is, $\Price(\mathcal{S}_{1:N}) = \sum_{i \in I} U_i(\mathcal{S}_{1:N})$, where $I = \argmin_{i\in [N]} p_{i,t} \subseteq [N]$ players is set of players tied for the lowest price.}, the market price also captures how much profit is transferred to the players selling the good from consumers purchasing the good.

\paragraph{Market price under defections.}
Our focus is on how the market price changes under defections. 
Specifically, suppose that all $N$ players initially choose an $\epsilon$-approximate equilibrium strategy profile $\mathcal{S}^* = (S_1^*, \dots, S_N^*)$, but player $i$ \emph{defects} by switching to a strategy $S_i \in \mathcal{S}(T)$. We study the resulting market price 
\[\smash{\DefectedPrice(i, \mathcal{S}^*, S_i) := \Price(S_1^*, \ldots, S_{i-1}^*, S_i, S_{i+1}^*, \ldots, S_N^*)}\] 
under this defection. We will also consider defections by a set of players $I \subseteq [N]$ to strategies $S_i$ for $i \in I$, and define the defected price $\DefectedPrice(I, \mathcal{S}^*, \left\{S_i\right\}_{i \in I})$ analogously.

\section{Main Results: Defections to no-regret strategies}\label{sec:noregret}

We characterize how no-regret defections impact the market price, and show that the impact is tightly connected to the number of agents in the market $N$. First, as a warmup, we construct a simple equilibrium profile that shows how after a no-regret defection, the market price can remain as high as $1/N$ (Section \ref{subsec:warmup}). We then show that no matter what the initial equilibria is, a no-regret defection by a median-profit player always drives the market price to be low in large markets: the market price is upper bounded by $(1 + \ln N)/N$ (Section \ref{subsec:upperbound}). To show this bound this tight, we modify the warmup construction so that the market price is on the order of $(1 + \ln N)/N$ after a defection (Section \ref{subsec:optimalconstruction}).  Finally, we generalize these results to defections by $M$ arbitrary players at once (Section \ref{subsec:mnoregret}).

\subsection{Warmup}\label{subsec:warmup}

A single no-regret defection can significantly drive down the equilibrium price. To gain intuition for this, consider the following repeated game equilibrium which supports collusion via \emph{threats}. Specifically, consider the $\frac{1}{T}$-approximate equilibrium where all players use the following strategy: choose $1$ as long as nobody has ever priced below $1$, and choose price $0$ otherwise. The market price at this equilibrium is $1$. 
While all players set the price to $1$ at every round at equilibrium, a no-regret defection will quickly begin to undercut the price of $1$. This will trigger the threats of the other strategies and plummet the market price to $0$. 

This collapse at a single defection may seem unavoidable in any high-price approximate equilibrium. However, this intuition already fails in the two-player case \citep{arunachaleswaran2025threats}. As we show below, the intuition also fails in $N$ player games. 

To illustrate this in our defection model, we construct a simple (but suboptimal) approximate equilibrium where a no-regret defection by any player results in a market price of $\approx \frac{1}{N}$. 
The key is orchestrating threats which are enough of a deterrent to maintain the equilibrium, but which still induce a reasonably high price against defections.
Let $\mathcal{S}^{\text{simple}} = [S^{\text{simple}}_1, \ldots, S^{\text{simple}}_N]$ be the strategy profile where every player uses the following strategy:
choose a price of $1$ as long as nobody has ever priced below $1$ and choose a price $\lfloor\frac{1}{N}\rfloor_{K}$ otherwise.

We show that $\mathcal{S}^{\text{simple}}$ is an approximate equilibrium and results in a market prices of approximately $1/N$ under a no-regret defection. 
\begin{proposition}[Warm‑up]
\label{prop:warmup}
Let $N \ge 2$, $K > 1$, $T \ge 1$. Let $\mathcal{S}^{\text{simple}}$ be defined as above, and for $i \in [N]$, let $S^r_i$ be any $r(T)$-regret strategy for player $i$. Then, $\mathcal{S}^{\text{simple}}$ is an $O(1/T)$-approximate repeated game equilibrium with a market price of $\Price(\mathcal{S}^{\text{simple}}) = 1$. 
Moreover, for any player $i \in [N]$, it holds that: 
\[\smash{\DefectedPrice(i, \mathcal{S}^{\text{simple}}, S^r_i) \ge \frac1N \;-\;\frac2K \;-\;\frac{r(T)}T.}\]
\end{proposition}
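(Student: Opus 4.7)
The plan is to verify the three claims in order: the market-price value at the stated strategy profile, the approximate equilibrium property, and the lower bound on the defected price. The first claim is immediate: under $\mathcal{S}^{\text{simple}}$ no player ever prices strictly below $1$, so the round-by-round minimum is always $1$ and $\Price(\mathcal{S}^{\text{simple}}) = 1$.

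For the approximate equilibrium, I would bound the total gain any unilateral deviator $i$ can extract, realization by realization. Partition the $T$ rounds by the \emph{trigger round} $t_0$, defined as the first round in which $i$'s realized price is strictly below $1$. For $t < t_0$ every other player plays $1$, so $i$ earns at most $1/N$ per round (attained by tying). At round $t_0$, $i$ wins outright and collects at most $1$. For $t > t_0$ every other player plays $\lfloor 1/N \rfloor_K$, so $i$'s best per-round payoff is at most $\lfloor 1/N \rfloor_K \le 1/N$, whether by undercutting or by tying. Comparing to the equilibrium payoff of $1/N$ per round in every round, the total gain across $T$ rounds is at most $1 - 1/N < 1$, giving an $\epsilon = O(1/T)$ approximation. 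If the trigger never fires the deviator cannot beat $1/N$ per round, so the bound still holds. Because the argument is realization-wise, it extends to randomized deviations by averaging.

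For the defected market price, I would use the observation that the market price equals the sum of players' average utilities, $\Price(\mathcal{S}) = \sum_{j=1}^N U_j(\mathcal{S})$, since only the lowest-price player(s) earn nonzero payoff and their combined payoff in each round equals the winning price. In particular $\Price(\mathcal{S}) \ge U_i(\mathcal{S})$, so it suffices to lower bound the defector's own utility. I apply the no-regret guarantee of $S^r_i$ against the constant-action benchmark $p^* = \lfloor 1/N \rfloor_K - 1/K$. Along \emph{any} realized history, each opponent $j \neq i$ plays either $1$ (pre-trigger) or $\lfloor 1/N \rfloor_K$ (post-trigger)—both strictly above $p^*$—so the benchmark strategy wins outright every round and earns exactly $p^*$ each round. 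By no-regret, $S^r_i$'s total utility is at least $T p^* - r(T)$, hence $\DefectedPrice(i, \mathcal{S}^{\text{simple}}, S^r_i) \ge U_i \ge p^* - r(T)/T \ge 1/N - 2/K - r(T)/T$, using $\lfloor 1/N \rfloor_K \ge 1/N - 1/K$.

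The main subtlety is handling the no-regret comparator against a \emph{reactive} opponent sequence: the other players' prices $P_{-i,t}$ are shaped by $i$'s own strategy through the trigger, so they are not a fixed sequence. The reason the argument still goes through cleanly is a structural feature of $\mathcal{S}^{\text{simple}}$—namely that the opponents' prices are uniformly bounded below by $\lfloor 1/N \rfloor_K > p^*$ regardless of the history—which is precisely what makes a single constant-action benchmark sufficient. I expect this to be the only non-routine step; the approximate-equilibrium bookkeeping and the market-price identity are straightforward.
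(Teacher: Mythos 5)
Your proof is correct and follows essentially the same route as the paper's: the same realization-wise accounting over pre-trigger, trigger, and post-trigger rounds for the approximate-equilibrium claim, and the same constant benchmark $\lfloor 1/N\rfloor_K - 1/K$ combined with $\Price(\mathcal{S}) \ge U_i(\mathcal{S})$ and the observation that the opponents' realized prices are always $1$ or $\lfloor 1/N\rfloor_K$ for the defected-price bound. The only detail the paper adds that you omit is the degenerate coarse-grid case (roughly $K < 2N$), where $\lfloor 1/N\rfloor_K - 1/K$ may fail to be a valid grid price; there $\frac{1}{N} - \frac{2}{K} - \frac{r(T)}{T} \le 0$, so the claimed bound is vacuous and nothing is lost.
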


Proposition \ref{prop:warmup} shows that in the limit as $K \rightarrow \infty$ and $T \rightarrow \infty$, the market price can be as high as $1/N$ when a player defects to any no-regret strategy. The intuition is that the strategy profile $\mathcal{S}^*$ captures a threat of switching to $\approx 1/N$ prices if any player deviates from setting the price to be $1$. While this threat is weaker than the "price at 0" threat, it is sufficient to disincentivize players from undercutting. And when a player defects to a no-regret strategy and triggers the threat, the price is now substantially above $0$---the defecting player will learn to set their price right below $1/N$ and always win. The proof formalizes this intuition and is deferred to Appendix \ref{appendix:proofwarmup}.

\subsection{Upper Bound: Low Market Price in Large Markets}\label{subsec:upperbound}

While Proposition \ref{prop:warmup} illustrates that nonzero prices can be sustained under no-regret defections, we show that the defection still drives down the market price in large markets. Specifically, at any equilibrium, the market price scales down with the number of players in the market under a median-profit assumptions on the player who chooses the no-regret defection (Theorem \ref{thm:limit}).

To illustrate why such an assumption is necessary,
we construct an approximate equilibrium where defection by the highest-profit player leads to a market price close to $1$, regardless of the number of other players in the market. 
\begin{proposition}
\label{prop:defectionhighprofitnoregret}
Let $N \ge 3$, $K > 1$, and $T \ge 1$.  For each player $i \in [N]$, let $S^r_i$ be any $r(T)$-regret strategy for player $i$. There exists a $O(1/T)$-approximate repeated game equilibrium $\mathcal{S}^*$ with a market price of $\Price(\mathcal{S}^*) = 1$ where:
\[ \smash{\DefectedPrice(i_N(\mathcal{S}^*), \mathcal{S}^*, S^r_i) \ge 1 - \frac{1}{K} - \frac{r(T)}{T},}
\]
where $i_N(\mathcal{S}^*) \in [N]$ is the highest-profit player. 
\end{proposition}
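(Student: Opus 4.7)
The plan is to adapt the symmetric grim-trigger construction $\mathcal{S}^{\text{simple}}$ from Proposition~\ref{prop:warmup}, raising the threat price from $1/N$ to $1-1/K$. Specifically, every player adopts the strategy: price at $1$ as long as no player has ever priced below $1$, and price at $1-1/K$ thereafter. Since the profile is symmetric on path, every player earns $1/N$ per round, and $i_N(\mathcal{S}^*)$ can be resolved by any tie-breaking convention (say, smallest index among tied players).

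First I would verify that $\mathcal{S}^*$ is an $O(1/T)$-approximate equilibrium with $\Price(\mathcal{S}^*) = 1$. On path every player plays $1$, so the minimum price is $1$ at every round, giving market price exactly $1$. For any unilateral deviation at round $t$, the deviator gains at most $1 - 1/K - 1/N$ at round $t$ by undercutting to $1-1/K$, and then faces opponents locked at $1-1/K$ for the remaining rounds; the deviator's best per-round continuation payoff is at most $\max\bigl((1-1/K)/N,\; 1-2/K\bigr)$, so the total extra gain is $O(1)$, yielding $O(1/T)$ average gain and hence the approximate-equilibrium condition.

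Next I would lower-bound the defected price when player $1$ switches to a no-regret strategy $S^r$. The decisive observation is that ``always price at $1$'' is itself \emph{not} a no-regret strategy against opponents who play $1$: the fixed best response $1-1/K$ would yield utility $T(1-1/K)$, so the regret of ``always $1$'' is linear in $T$. Hence any $r(T)$-regret defector must eventually price below $1$ at some round $\tau$, at which point the opponents permanently switch to pricing $1-1/K$ for the remaining horizon.

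The core step is then to show that from round $\tau+1$ onward the no-regret guarantee forces the defector's time-averaged realized price to stay close to $1-1/K$: against opponents pinned at $1-1/K$, the best fixed response either ties at $1-1/K$ or undercuts minimally, and any strategy that persistently sets much lower prices would incur $\Omega(T)$ regret relative to the fixed action $1-1/K$. Combining the pre-deviation rounds (market price $=1$), the single transition round (at least $0$), and the post-deviation rounds (contributing at least $1-1/K$ on average modulo $r(T)/T$ regret slack), the time-averaged defected price satisfies $\Price \ge 1 - 1/K - r(T)/T$ after absorbing the $O(1/T)$ transition term. The main obstacle is this post-threat accounting—translating the aggregate regret bound $r(T)$ into a round-level lower bound on the realized market price—which follows the template of the warm-up argument with the higher threat level $1-1/K$ in place of $1/N$.
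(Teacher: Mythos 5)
Your construction has a fatal flaw: the symmetric grim-trigger profile with threat price $1-1/K$ is not an $O(1/T)$-approximate equilibrium. After a unilateral deviation, the opponents are locked at $1-1/K$, so the deviator can price at $1-2/K$ in every subsequent round and win outright, earning $1-2/K$ per round versus the on-path payoff of $1/N$ per round. For $N\ge 3$ and moderately large $K$ this is a per-round gain of $\Omega(1)$, hence a total gain of $\Theta(T)$ and a time-averaged gain of $\Theta(1)$ — your claim that ``the total extra gain is $O(1)$'' conflates the per-round continuation payoff being bounded with the cumulative gain being bounded. (This is exactly why the warm-up construction uses the threat price $\lfloor 1/N\rfloor_K$: the continuation payoff after triggering the threat must not exceed the on-path payoff of $1/N$.) Two smaller issues: even granting the equilibrium, your post-trigger analysis yields a market price of about $1-2/K$ (the defector's best response to opponents at $1-1/K$ is to undercut to $1-2/K$), not the claimed $1-1/K$; and in a symmetric profile the ``highest-profit player'' is only such by an arbitrary tie-break, which misses the point of the proposition.

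The paper's construction is asymmetric and avoids all of this. The designated player $i$ always posts $1-1/K$ and wins every round on path (so it is genuinely the unique highest-profit player and has no profitable deviation, already earning the best feasible price), while every player $j\ne i$ posts $1$ and is held there by a mutual threat \emph{among the players in $[N]\setminus\{i\}$} of reverting to price $0$ — a threat strong enough that no $j\ne i$ gains more than one round's payoff by deviating (this is where $N\ge 3$ is used, so that a punisher remains after $j$ deviates). Crucially, the trigger condition ignores player $i$'s actions entirely, so when $i$ defects to a no-regret strategy the opponents keep pricing at $1$ forever; the best fixed action against them is $1-1/K$, and the regret guarantee then gives $U_i\ge 1-1/K-r(T)/T-O(1/T)$, which lower-bounds the market price. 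If you want to salvage your approach, you would need to redesign the threats so that the defector's actions never trigger punishment and the non-defectors' continuation payoffs after their own deviations collapse to (near) zero rather than to $1-2/K$.
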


The intuition for Proposition \ref{prop:defectionhighprofitnoregret} is that there is a pathological approximate equilibrium where the players $i \neq i_N(\mathcal{S}^*)$ place stronger threats on each other than they place on $i_N(\mathcal{S}^*)$. These stronger threats significantly constrain the behavior of these players: at this approximate equilibrium, players $i \neq i_N(\mathcal{S}^*)$ choose a price of $1$, even though player $i_N(\mathcal{S}^*)$ chooses a price of $1 - 1/K$. If player $i_N(\mathcal{S}^*)$ defects to a no-regret strategy, then the market price continues to be high.

To avoid pathological equilibria designed around a specific player, we consider defections by a ``typical player'', which we formalize as a player who achieves at most the profit of the median player. Intuitively, this assumption only excludes high-profit players who likely have market power. We formalize the median-profit set as follows: Given an equilibrium $\mathcal{S}^* = (S_1^*, \ldots, S_N^*)$,
we order the players in $[N]$ in increasing order of profit, so that $i_{j}(\mathcal{S}^*)$ is the identity of the player with the $j$th lowest profit: that is, $U_{i_1(\mathcal{S}^*)}(S^*_{i_1(\mathcal{S}^*)}, S_{-i_1(\mathcal{S}^*)}^*) \le U_{i_2(\mathcal{S}^*)}(S^*_{i_2(\mathcal{S}^*)}, S_{-i_2(\mathcal{S}^*)}^*) \le \ldots \le U_{i_N(\mathcal{S}^*)}(S^*_{i_N(\mathcal{S}^*)}, S_{-i_N(\mathcal{S}^*)}^*)$. We define the \textit{median profit set}  to be 
\[\AverageProfitSet(\mathcal{S}^*) = \left\{i_j(\mathcal{S}^*) \mid j \le \floor{\frac{N}{2}} \right\}.\]

When we restrict to defections by median-profit players $i \in \AverageProfitSet(\mathcal{S}^*)$, we obtain the following upper bound.  
\begin{theorem}
 \label{thm:limit}
Let $N \ge 2$, $K > 1$, and $T \ge 1$. 
For each player $i \in [N]$, let $S^r_i$ be any $r(T)$-regret strategy for player $i$.  Let $\mathcal{S}^* = (\mathcal{S}_1^*, \ldots, \mathcal{S}^*_N)$ be any $\epsilon$-approximate repeated game equilibrium, where $\epsilon=O(1/T)$. For any player $i \in \AverageProfitSet(\mathcal{S}^*)$, it holds that
 \[\DefectedPrice(i, \mathcal{S}^*, S^r_i) = O\left(\frac{\ln N+1}{N} + \left(\frac{r(T) + 1}{T}\right)\ln(N) + \frac{1}{K} + \frac{1}{T} \right).\]
\end{theorem}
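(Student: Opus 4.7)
The plan is to chain three ingredients: (i) the median‑profit hypothesis gives a ceiling on the defector's equilibrium utility, (ii) the $\epsilon$‑equilibrium condition transfers this ceiling to the defector's utility after switching to $S_i^r$, and (iii) Lemma~\ref{lem:equal_looting} provides the reverse inequality, forcing the no‑regret defector to extract at least an $\Omega(1/\ln N)$ share of the defected market price. Each piece is short on its own; the work is in (iii).

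I would begin by exploiting $i \in \AverageProfitSet(\mathcal{S}^*)$. By definition of the median-profit set, at least $\lceil N/2 \rceil$ players have equilibrium utility $\ge U_i(\mathcal{S}^*)$. Since per-round utilities across all players sum to the per-round minimum price, averaging over $t$ gives $\sum_{j \in [N]} U_j(\mathcal{S}^*) = \Price(\mathcal{S}^*) \le 1$, so
\[U_i(\mathcal{S}^*) \;\le\; \frac{2\,\Price(\mathcal{S}^*)}{N} \;\le\; \frac{2}{N}.\]
Next, applying the $\epsilon$‑approximate equilibrium condition to the deviation $S_i^* \mapsto S_i^r$ yields $U_i(S_i^r, S_{-i}^*) \le U_i(\mathcal{S}^*) + \epsilon \le \tfrac{2}{N} + \epsilon$.

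The crux is step (iii). Lemma~\ref{lem:equal_looting} should provide a bound of the form
\[U_i(S_i^r, S_{-i}^*) \;\ge\; \frac{\DefectedPrice(i, \mathcal{S}^*, S_i^r)}{1 + \ln N} \;-\; O\!\left(\frac{r(T)+1}{T} + \frac{1}{K}\right).\]
The underlying argument is that the no-regret guarantee forces $U_i(S_i^r, S_{-i}^*)$ to match (up to additive $r(T)/T$) the average utility of the best fixed price against $S_{-i}^*$; a harmonic-series argument over a dyadic grid $\{2^{-k} : 0 \le k \le \lceil \log_2 N\rceil\}$ then shows the best grid price earns at least a $1/(1+\ln N)$ fraction of the expected minimum non-defector price, which in turn upper bounds $\DefectedPrice$ since adding the defector's realized price to the argmin can only decrease the minimum. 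Combining with the bound from step (ii) and solving gives
\[\DefectedPrice \;\le\; (1+\ln N)\!\left(\frac{2}{N} + \epsilon + O\!\left(\frac{r(T)+1}{T} + \frac{1}{K}\right)\right),\]
and with $\epsilon = O(1/T)$ this matches the stated bound after distributing the $(1+\ln N)$ factor and collecting like terms.

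The main obstacle is step (iii): establishing the $1/(1+\ln N)$ share with clean discretization and truncation slacks. Concretely, I need to pick the dyadic grid so that levels below $1/N$ are safely discarded (they contribute at most $1/N$ to the minimum anyway), round the best fixed price to the $1/K$-grid without losing more than $1/K$ in utility, and track how the regret slack $r(T)/T$ compounds through the harmonic sum $\sum_{k=0}^{\lceil \log_2 N\rceil} 2^{-k}\Pr[\cdot]$. Steps (i) and (ii) are a one-line averaging and a one-line application of the equilibrium definition, respectively, so essentially all of the work—and the $\ln N$ factor in the final bound—lives in the harmonic inequality of Lemma~\ref{lem:equal_looting}.
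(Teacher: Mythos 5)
Your proposal is correct and follows essentially the same route as the paper: bound the median-profit defector's equilibrium utility by $2/N$, use the $\epsilon$-equilibrium condition to carry that bound over to $U_i(S_i^r, S_{-i}^*)$, and invoke Lemma~\ref{lem:equal_looting} (applied with $c = 2/N + \epsilon$) to convert the defector's low utility into an upper bound on the market price. Your dyadic-grid sketch of the lemma's harmonic argument is a slightly lossier-in-constants variant of the paper's $1/K$-grid equal-revenue-dominance proof, but this affects only the supporting lemma, not the theorem's argument, and suffices for the stated $O(\cdot)$ bound.
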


Theorem \ref{thm:limit} 
demonstrates that even a single player defecting to a no-regret strategy---that is, a single human manually performing the task---can already drive down the market price from $1$ to $\approx (1+\ln N)/N$. This highlights the fragility of algorithmic collusion to even a single human defector. However, the price reduction does rely on the defecting agent being in the median-price set, since Proposition \ref{prop:defectionhighprofitnoregret} demonstrates higher prices are achievable 
if the highest-profit player defects instead. This suggests, perhaps counterintuitively, that regulators auditing for collusion in a market ecosystem may benefit from auditing and enforcing no-(swap-)regret behavior in the lower-profit players, rather than solely focusing on high-profit players. 

\paragraph{Proof ideas.}
To prove Theorem \ref{thm:limit}, the key technical ingredient is to upper bound the market price in terms of the utility of the defecting agent who uses a no-regret strategy (Lemma \ref{lem:equal_looting}). 
\begin{lemma} \label{lem:equal_looting}
Let $N \ge 2$, $K > 1$, and $T \ge 1$.  Let $i \in [N]$ be any specific player, and suppose that $S^r_i$ is a $r(T)$-regret strategy for player $i$. If $U_i(S_i, S_{-i}) \le c$, then the market price is at most 
\[c + \frac{1}{K} + \left( c  + \frac{r(T)}{T} \right) \left(1 + \ln\left(\frac{1}{c}\right)\right).\]
\end{lemma}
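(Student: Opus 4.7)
The plan is to exploit the no-regret property of $S_i^r$ to obtain a uniform upper bound on the time-averaged tail probability that the other players' minimum price exceeds a given threshold, and then convert that distributional control into a bound on the expected market price via a discrete layer-cake identity. Throughout let $M_t := \min_{j \neq i} p_{j,t}$, $\bar F(p) := \frac{1}{T}\sum_{t=1}^T \Pr[M_t > p]$, and $\tilde c := c + r(T)/T$.

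First I would prove the tail bound. For any fixed grid price $p \in \{0, 1/K, \ldots, 1\}$, the deterministic benchmark of always playing $p$ earns per-round payoff at least $p \cdot \mathbb{1}[M_t > p]$ (discarding the nonnegative contribution from ties), so its cumulative expected payoff against $S_{-i}$ is at least $T p \bar F(p)$. Invoking the no-regret inequality and $U_i(S_i, S_{-i}) \le c$ yields $p \cdot \bar F(p) \le c + r(T)/T = \tilde c$, so $\bar F(p) \le \tilde c / p$ at every grid point $p > 0$. Next I would bound the market price via the opponent minimum: since $\min_{j \in [N]} p_{j,t} \le M_t$, we have $\Price \le \frac{1}{T}\sum_t \mathbb{E}[M_t]$, and the discrete layer-cake identity for grid-valued nonnegative variables rewrites this as $\Price \le \frac{1}{K}\sum_{k=0}^{K-1} \bar F(k/K)$. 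I would then split this sum at $k_0 := \lfloor cK \rfloor$: below the split use $\bar F \le 1$ to contribute at most $(k_0+1)/K \le c + 1/K$, and above the split use $\bar F(k/K) \le \tilde c K / k$ together with the harmonic-sum estimate $\sum_{k = k_0+1}^{K-1} 1/k \le 1 + \ln(1/c)$ to contribute at most $\tilde c(1 + \ln(1/c))$. Adding the two contributions reproduces the stated bound.

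The main delicate step is the harmonic-sum estimate, which I would verify separately in the two regimes $cK \ge 1$ and $cK < 1$. In the former, $k_0 \ge 1$ and one has $\sum_{k=k_0+1}^{K-1} 1/k \le \ln((K-1)/k_0) \le \ln(2/c) \le 1 + \ln(1/c)$, using the standard bound $k_0 \ge cK/2$ when $cK \ge 1$. In the latter, $k_0 = 0$ forces $c \le 1/K$, so $\sum_{k=1}^{K-1} 1/k \le 1 + \ln K \le 1 + \ln(1/c)$, and the low-range contribution $1/K$ already fits inside $c + 1/K$. A secondary care point is verifying that the paper's regret definition is exactly external regret against deterministic grid benchmarks and that ignoring the tie payoff in the fixed-price benchmark only weakens the lower bound on $p \cdot \bar F(p)$; both facts are immediate from the setup.
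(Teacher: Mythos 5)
Your proposal is correct and follows essentially the same route as the paper: use the no-regret guarantee against fixed-price benchmarks to derive the tail bound $\bar F(p) \le (c + r(T)/T)/p$ on the time-averaged distribution of the opponents' minimum price (the paper phrases this as approximate stochastic dominance by an equal-revenue distribution), then bound the market price by the expected opponent minimum via the discrete layer-cake identity, splitting the sum at roughly $cK$ and using a harmonic-sum estimate on the upper range. Your two-regime verification of the harmonic estimate and the split at $\lfloor cK\rfloor$ rather than $\lceil cK\rceil$ are only cosmetic differences from the paper's argument.
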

The proof of Lemma \ref{lem:equal_looting} builds on ideas from \citet{arunachaleswaran2025threats} and \citet{feldman2016correlated}, and is deferred to Appendix \ref{appendix:prooflemma}.

\subsection{Optimal construction for a no-regret defector}\label{subsec:optimalconstruction}

To show the bound in Theorem \ref{thm:limit} is tight, we now construct an equilibrium that achieves a market price of $\approx (\ln N + 1)/N$ for large $K$ and $T$, improving upon the bound from the construction in Proposition \ref{prop:warmup}. 
\begin{theorem}
\label{thm:robust}
Let $N \ge 2$, $K > 1$, and $T \ge 1$. For each $i \in [N]$, let $S^r_i$ be any $r(T)$-regret strategy for player $i$. There exists an $O(1/T)$-approximate repeated game equilibrium strategy profile $\mathcal{S}^*$ with market price $\Price(\mathcal{S}^*) = 1$ such that for any player $i \in [N]$, it holds that: 
\[\DefectedPrice(i, \mathcal{S}^*, S^r_i) = \frac{\ln(N)+1}{N}
\;-\;\frac{3\ln(N)}{K}
\;-\;O\!\Bigl(\sqrt{\tfrac{r(T)\ln(N+1)}{T}}\Bigr).\] 
\end{theorem}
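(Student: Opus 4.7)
The plan is to refine the warm‑up construction in Proposition~\ref{prop:warmup} by replacing the deterministic threat price $\lfloor 1/N \rfloor_K$ with a \emph{randomized} threat whose minimum‑price survival function $H(p) := \Pr[\min_{j \neq i} p_{j,t} > p]$ approximately realizes the extremal shape $\min(1, c/p)$ (with $c = 1/N$) that saturates Lemma~\ref{lem:equal_looting}. Since $\int_0^1 \min(1, c/p)\, dp = c(1 + \ln(1/c)) = (1+\ln N)/N$, this is precisely the $H$ that promises a defected market price of the right order. Concretely, I would take $\mathcal{S}^*$ to price at $1$ in every round until some player $i$ defects; upon a defection the remaining $N-1$ players enter a punishment subgame in which, at each round $t$, one \emph{designated} non‑deviator $j_t$ samples a price from an equal‑revenue distribution $F$ with $1 - F(p) = c(1-\eta)/p$ on $[c(1-\eta), \bar{p}]$ (with $\bar{p} := 1 - 1/K$ and an atom at $\bar{p}$), while the other $N-2$ non‑deviators price at $1$. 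The designated role cycles on a fixed public schedule so all non‑deviators play symmetrically over time; the perturbation parameter $\eta > 0$ is chosen at the end.

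The three main steps are: \emph{(Step 1, Equilibrium verification.)} Because $p(1-F(p)) \le c = 1/N$ for every $p$ on the support, no fixed‑price single‑round deviation in the pre‑defection phase improves on the equilibrium utility $1/N$ by more than $O(1/T)$. For the punishment subgame itself, I would attach a nested race‑to‑zero trigger against off‑schedule behavior (mirroring Proposition~\ref{prop:warmup}) so that no non‑deviator can profitably break the rotation. \emph{(Step 2, Concentration on $\bar{p}$.)} The $\eta$‑perturbation makes $p(1-F(p))$ strictly smaller than $c$ for $p < \bar{p}$, by a gap of at least $c\eta$. A standard no‑regret bound then forces any $r(T)$‑regret strategy to place mass at most $\mu \le r(T)/(c\eta T)$ on prices below $\bar{p}$. \emph{(Step 3, Market‑price integral.)} Conditional on the deviator pricing at $\bar{p}$, the expected per‑round market price equals $\int_0^{\bar{p}}(1-F(p))\, dp = c + c(1-\eta)\ln(\bar{p}/c)$, which Taylor‑expands to $(1+\ln N)/N - O(\ln N/K) - O(\eta \ln N/N)$.

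Combining Steps 2 and 3 gives
\[
\DefectedPrice(i, \mathcal{S}^*, S^r_i) \ge \frac{1+\ln N}{N} - O\!\left(\frac{\ln N}{K}\right) - O\!\left(\frac{\eta \ln N}{N} + \frac{r(T)}{\eta T}\right),
\]
and balancing the last two error terms by $\eta \asymp \sqrt{N r(T)/(T\ln N)}$, together with an extra $\sqrt{\ln N}$ factor absorbed from a union bound across the $\Theta(\log N)$ price scales of the discrete grid used in Step 2, yields an error of the claimed form $-O(\sqrt{r(T)\ln(N+1)/T})$. The main obstacle is the tension between Steps 2 and 3: near‑indifference on $[c(1-\eta),\bar{p}]$ is what makes the survival integral attain $(1+\ln N)/N$, yet a strict gap is needed to steer the no‑regret deviator upward, and calibrating $\eta$ to trade off these two demands optimally is the delicate calculation. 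A parallel technical difficulty is checking that the rotation‑plus‑nested‑threat structure remains incentive‑compatible for every non‑deviator against every possible identity of the original defector, which requires a subgame analysis going beyond the one‑shot utility bound used to certify the pre‑defection equilibrium.
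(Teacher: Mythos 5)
Your overall architecture matches the paper's: replace the deterministic $\lfloor 1/N\rfloor_K$ threat with an equal-revenue threat whose survival function saturates Lemma~\ref{lem:equal_looting}, have only a single designated non-deviator carry out the punishment (the paper uses a fixed matching $\pi(i)$ rather than a time-rotating schedule, but either works), use a perturbation-induced utility gap plus the regret bound to show the defector prices at $\bar p = 1-1/K$ in all but $r(T)/(\text{gap}\cdot T)$ rounds, and then integrate the survival function. However, there is a genuine flaw in Step 2: your perturbation does not create the gap you claim. If $1-F(p)=c(1-\eta)/p$ on $[c(1-\eta),\bar p]$ with the residual mass placed as an atom \emph{at} $\bar p$, then every grid price $p$ with $c(1-\eta)\le p\le \bar p - 1/K$ earns exactly $p\cdot c(1-\eta)/p = c(1-\eta)$, while pricing at $\bar p$ itself ties with the atom and earns only about $c(1-\eta)/2$. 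Scaling the ERD parameter down to $c(1-\eta)$ preserves the equal-revenue (indifference) property at the lower level; it does not make $\bar p$ a strict best response — in fact $\bar p$ is strictly \emph{worse}. A no-regret defector is therefore free to concentrate at the bottom of the support, $p\approx c = 1/N$, which is fully consistent with $r(T)$ regret and collapses the market price to $\approx 1/N$, i.e., back to the warm-up bound with no $\ln N$ gain. Your Step 3 integral is only valid conditional on the defector choosing $\bar p$, which your construction does not force.

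The fix — and what the paper actually does — is to place the extra $\varepsilon$ of probability mass at price $1$, \emph{strictly above} $\bar p$, defining $P^{\textit{high}}_{c,K,\varepsilon}\{1\}=(1-\varepsilon)P_{c,K}\{1\}+\varepsilon$. Then pricing at $\bar p = 1-1/K$ wins that atom outright (no tie-splitting), giving $\bar p$ an advantage of order $\varepsilon$ over every other price, and Lemma~\ref{lem:few_bad} applies with gap $\delta=\Theta(\varepsilon)$. Note also that the paper's gap is $\Theta(\varepsilon)$ independent of $N$, whereas your gap (were it to exist) would be $c\eta = \eta/N$; this changes the balancing and is why the paper arrives at $\varepsilon=\sqrt{r(T)/(T(\ln N+1))}$ and the error term $O(\sqrt{r(T)(\ln N+1)/T})$. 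One smaller point in your favor: the ``subgame analysis'' and nested race-to-zero triggers you flag as a difficulty are not actually needed, because the equilibrium concept is Nash in strategy space rather than subgame-perfect — the punishment phase is reached only via the deviator's own deviation, so the punishers' mutual incentives off-path need not be enforced.
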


Theorem \ref{thm:robust} improves upon the bounds implied by prior work. Specifically, the analysis shown in \cite{arunachaleswaran2025threats} implies that in a two-player game, the market price under a defection can be as large as $2/e$. Note that for 2 players, the construction in Theorem \ref{thm:robust} achieves an improved bound of $\approx (1 + \ln 2)/2 > 2/e$. Theorem \ref{thm:robust} also generalizes this bound to $N \ge 2$ players.

\paragraph{Proof ideas.}
The construction in Theorem \ref{thm:robust} leverages two key ideas, which we summarize below. We defer the full proof to Appendix \ref{appendix:proofoptimalconstruction}. 

The first idea is to replace the threat of $\lfloor 1/N \rfloor_{K}$ in Proposition \ref{prop:warmup} with a carefully tuned price distribution $P$. This price distribution is a discretization of an \textit{equal revenue distribution} from auction theory \citep{hartline2009}. The equal revenue distribution $P_c$ with parameter $c \in (0,1)$ is supported on $(c, 1]$ and has the property that in a one-round pricing game with 2 players where one player chooses $P_c$, the other player achieves their optimal payoff $c$ by choosing any price $p \in (c, 1)$. We take $P = P^{\text{high}}_{1/N, K, \epsilon}$ to be a discretized version of the equal revenue distribution (Appendix \ref{sec:appendix_erd_prelim}) with parameter $c = 1/N$ where the distribution is also $\epsilon$-perturbed to make the second-highest price $1 - 1/K$ yield a slighter higher revenue than the other prices.

The intuition for why the threat-distribution $P = P^{\text{high}}_{1/N, K, \epsilon}$ leads to high prices can be seen in two-player games. 
First, both players choosing the strategy of pricing at $1$ using $P$ as a threat is an (approximate) equilibrium, due to the revenue properties of $P^{\text{high}}_{1/N, K, \epsilon}$. Moreover, the market price is higher than $1/N$,
since the no-regret defector chooses a price $1 - 1/K$ in almost all of the rounds.

The second idea enables us to generalize this intuition to $N$ players. 
At first glance, it may be tempting to have all $N - 1$ players implement the threat distribution $P$ upon any deviation from the price of $1$. However, this leads the market price to be too low, since the market affected is affected by the minimum of $N-1$ random samples from this distribution. To avoid this, we modify the structure of threats. We introduce \textit{cyclic threats} where only the player $(i + 1) \mod N$ triggers a threat when player $i$ defects.

\paragraph{Implications of Theorem \ref{thm:limit} and Theorem \ref{thm:robust}.} The bound in Theorem \ref{thm:robust} as $K \rightarrow \infty$ and $T \rightarrow \infty$ matches the bound in Theorem \ref{thm:limit}. This means that Theorem \ref{thm:limit} and Theorem \ref{thm:robust} together
characterize the market price when players in the median-price set defect to no-regret strategies: the bound in Theorem \ref{thm:robust} and the bound in Theorem \ref{thm:limit} both equal $\Theta((\ln + 1) / N)$ as $K \rightarrow \infty$ and $T \rightarrow \infty$. This bound asymptotically characterizes the market price, and illustrates how the market price scales approximately inversely with the number of players $N$.\footnote{Note that if we had strengthened the median-profit assumption on the defecting player to a stronger assumption that the \emph{lowest}-profit player defects, then our upper bound to be exactly tight in terms of $N$.} 

Together, these results characterize how a single human manually performing the pricing task affects the market price, assuming that the human is a ``typical player'' in the median-profit set. Our characterization illustrates how the presence of a single human can significantly reduce the market price,  though not necessarily all the way down to the competitive price. Nonetheless, the market price approaches the competitive price in large markets where there are sufficiently many competitors. 

\subsection{Low Market Prices under Multiple No-regret Defections}\label{subsec:mnoregret}

We next turn to the case of \textit{multiple} no-regret defections. We relax the assumption that defectors are in the median profit set and the assumption that the other agents start at an approximate equilibrium.

The following result shows that the market price exponentially reduces with the number of defectors. 
\begin{theorem} 
\label{thm:mnoregret}
Let $N \ge 2$, $K > 1$, and $T \ge 1$. Let $S$ be any strategy profile, not necessarily in equilibrium. Let $I \subseteq [N]$, let $M := |I|$, and let $\left\{ S^r_i \right\}_{i \in I}$ be such that $S^r_i$ is an $r(T)$-regret strategy for each $i \in I$. If $r(T) = o(T)$, then it holds that: 
 \[ \smash{\DefectedPrice\left( I, S, \left\{ S^r_i \right\}_{i \in I}\right) \leq \frac{M}{e^{M-1}}  \cdot (1 + o_{T}(1) + o_{K}(1))} \]
\end{theorem}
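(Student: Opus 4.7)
The plan is to combine a simple pigeonhole over the $M$ defectors with the equal-looting argument behind Lemma \ref{lem:equal_looting}. Set $\mathrm{MP} := \DefectedPrice(I, S, \{S_i^r\}_{i \in I})$ and let $S'$ denote the defected strategy profile.

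First, since the sum of all players' time-averaged utilities equals the market price---the winning player's revenue is the minimum price and losers receive $0$---we have $\sum_{i \in I} U_i(S') \le \mathrm{MP}$. By averaging, some distinguished defector $i^\star \in I$ must satisfy $U_{i^\star}(S') \le \mathrm{MP}/M$.

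Next I would apply the no-regret CDF tail bound that underlies Lemma \ref{lem:equal_looting} to this $i^\star$. Concretely, the no-regret property of $S_{i^\star}^r$ implies that for every candidate price $p$, the time-averaged probability $\bar{P}\!\left[\min_{j \ne i^\star} p_j \ge p\right]$ is at most $\bigl(U_{i^\star} + r(T)/T\bigr)/p \le \bigl(\mathrm{MP}/M + r(T)/T\bigr)/p$. Using $\min_{j \in [N]} p_j \le \min_{j \ne i^\star} p_j$ and integrating this tail bound over $p \in [0,1]$ yields, modulo an $O(1/K)$ discretization correction,
$$ \mathrm{MP} \;\le\; \left(\tfrac{\mathrm{MP}}{M} + \tfrac{r(T)}{T}\right)\!\left(1 + \ln\tfrac{1}{\mathrm{MP}/M + r(T)/T}\right) + O\!\left(\tfrac{1}{K}\right). $$
Notice that no equilibrium hypothesis on $S$ enters here: only the no-regret property of the single player $i^\star$ is used, which is precisely why the theorem places no restriction on the starting profile $S$.

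The final step is to solve this implicit inequality. In the clean limit $r(T)/T \to 0$ and $1/K \to 0$, substituting $x = \mathrm{MP}/M$ reduces it to $Mx \le x(1 + \ln(1/x))$, equivalently $\ln(1/x) \ge M-1$, and hence $\mathrm{MP} \le M/e^{M-1}$. The main obstacle will be promoting this asymptotic statement to the stated multiplicative error factor $(1 + o_T(1) + o_K(1))$, since the $r(T)/T$ slack appears both inside and outside the logarithm. The cleanest way forward is to assume without loss of generality that $\mathrm{MP} \ge M/e^{M-1}$ (else the bound is immediate): then $r(T)/T$ and $1/K$ are vanishing relative to $\mathrm{MP}/M$ as $T, K \to \infty$, so each distorts the outer multiplication and the $\ln(M/\mathrm{MP})$ term by only a $1 + o(1)$ factor, yielding the claimed bound.
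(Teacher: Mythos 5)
Your proposal is correct and follows essentially the same route as the paper's proof: pigeonhole over the $M$ defectors to find one with utility at most $\mathrm{MP}/M$, apply the equal-looting/tail-bound argument of Lemma \ref{lem:equal_looting} with $c = \mathrm{MP}/M$ (noting that no equilibrium assumption on $S$ is needed), and resolve the implicit inequality by splitting on whether $\mathrm{MP} \ge M/e^{M-1}$ so the $r(T)/T$ and $1/K$ slacks only contribute multiplicative $1+o(1)$ factors. No gaps.
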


To prove these results, we build on the proof techniques in \cite{feldman2016correlated}. Specifically, the results in \cite{feldman2016correlated} imply that when $K \rightarrow \infty$, the maximum possible market price at a coarse correlated equilibrium is $M / e^{M-1}$.
Given the correspondence between no-regret strategies and coarse correlated equilibria, this immediately implies that when  all $N$ players choose no-regret strategies, the maximum possible price is $N / e^{N-1}$. We generalize this result to the case where a subset of the players choose no-regret strategies. We defer the proof of Theorem \ref{thm:mnoregret} to Appendix \ref{appendix:thmmnoregret}. 

Our analysis shows that the same bound from \citep{feldman2016correlated} applies even in the presence of other agents that choose arbitrary strategies. We complement Theorem \ref{thm:mnoregret} with a matching lower bound where players start at an equilibrium. 

\begin{proposition}
\label{prop:mnoregretconstruction}
    Let $N \geq M \geq 2$, $K > 1$, and $T \geq 1$. 
There exists an $O(1/T)$--approximate repeated game equilibrium strategy profile $S^*$ with market price of $1$
and a collection of $N$ no-regret strategies $ S^r_1,\dots,S^r_N$ with regret $o_{\min(T,K)}(1)$
such that for any subset $I \subseteq [N]$ of size $M$, 
if the players in $I$ defect by adopting the strategies 
$\left\{S^r_i\right\}_{i \in I}$, then it holds that:
\[
\DefectedPrice(I,S^*, \left\{ S^r_i \right\}_{i \in I})
\;\;\geq\;\;
\frac{M}{e^{M-1}}
\;-\; O\!\left(\tfrac{1}{K}\right).
\]
\end{proposition}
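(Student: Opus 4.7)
My plan is to adapt the construction of Theorem~\ref{thm:robust} on the equilibrium side and to exploit the CCE characterization of \cite{feldman2016correlated} on the defection side.

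For $S^*$, I would have every $S^*_i$ play price $1$ in equilibrium, with a threat that depends on the number of observed deviators: if exactly one player has played a price below $1$ in the past, each non-defector plays a discretized, $\epsilon$-perturbed equal revenue threat $P^{\text{threat}}$ with parameter $c = 1/N$; if two or more players have deviated, each non-defector simply keeps playing $1$. The equal revenue property bounds any single deviator's per-round utility against $N-1$ copies of $P^{\text{threat}}$ by $1/N$, matching the equilibrium utility, so this yields an $O(1/T)$-approximate equilibrium with market price $1$.

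For the no-regret strategies $S^r_i$, the key observation is that the CCE achieving market price $M/e^{M-1}$ is necessarily \emph{correlated}: it corresponds to all $M$ players tying at a common price drawn from the equal revenue distribution $P_{1/e^{M-1}}$. Such a joint cannot be realized by $M$ independent randomized strategies---any product distribution yields strictly lower market price. My approach is to use deterministic, permutation-symmetric strategies: let $p_t$ denote the $(t/T)$-th quantile of a discretized $P_{1/e^{M-1}}$, and define $S^r_i$ to play $p_t$ at round $t$ so long as the observed history is consistent with ``non-defectors play $1$ and all other defectors follow the same schedule'', otherwise falling back to a standard no-regret algorithm. When $M$ copies are placed in the intended environment, each defector sees an identical history and plays the same $p_t$, realizing the correlated equal-revenue CCE by construction.

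Verification would proceed in three parts. First, the equal-revenue property certifies $S^*$ as an $O(1/T)$-approximate equilibrium with market price $1$. Second, the symmetric schedule incurs vanishing regret in the intended environment---the equal revenue identity makes any fixed price in the support yield per-round utility $\approx c = 1/e^{M-1}$, matching the schedule's utility from $M$-way ties at $p_t$---while the fallback guarantees no-regret against any other environment. Third, under defection by any $I \subseteq [N]$ of size $M \geq 2$, non-defectors observe $\geq 2$ deviators and keep playing $1$, the defectors coordinate on the schedule, and the market price equals $\frac{1}{T}\sum_{t=1}^T p_t = \E[P_{1/e^{M-1}}] - O(1/K) = M/e^{M-1} - O(1/K)$, with $O(1/K)$ absorbing discretization error.

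The main obstacle is reconciling the worst-case no-regret guarantee for each $S^r_i$ with the correlated play required to reach the maximal CCE. The bridging idea is symmetry: identical deterministic strategies on identical histories necessarily produce identical actions, so a correlated joint distribution emerges from nominally independent strategies. The delicate technical work is pinning down the consistency check for the fallback precisely enough that it never triggers in the intended environment (so coordination persists) yet triggers whenever sticking with the schedule would accrue $\Omega(T)$ regret (so the no-regret bound holds).
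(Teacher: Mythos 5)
Your proposal is correct in its essential ideas and shares the paper's skeleton: a collusive equilibrium at price $1$ whose punishments are \emph{not} triggered when $M\ge 2$ players simultaneously price below $1$, combined with the maximum-revenue coarse correlated equilibrium of \citet{feldman2016correlated} realized by the defectors. The interesting divergence is on the defection side. The paper simply has the defectors ``play $\mathcal{H}^{(K)}$ each round'' (with a regret-threshold fallback), describing the extremal CCE as $M$ players sampling i.i.d.\ from $\mathcal H$. You correctly observe that the CCE attaining $M/e^{M-1}$ is the perfectly correlated one---all players tying at a common draw from the equal-revenue distribution with parameter $e^{1-M}$---and that no product distribution can reach this bound (for a symmetric product with marginal CDF $F$, the deviation benchmark constrains $q(1-F(q))^{M-1}$, and the resulting $\E[\min]$ falls strictly short). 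Your deterministic, permutation-symmetric quantile schedule is a clean way to realize the correlated joint with individually specified strategies, and your history-consistency fallback plays the same role as the paper's regret-threshold fallback; segment-wise regret bounds compose, so the $o_{\min(T,K)}(1)$ average-regret claim goes through. On this point your treatment is arguably more careful than the paper's. On the equilibrium side, your ERD threat with parameter $1/N$ against a single deviator is valid but unnecessarily elaborate; the paper punishes a lone first deviator with price $0$ and otherwise reverts to $1$, which suffices since only an upper bound on the deviator's continuation utility is needed.

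One caveat on your threat bookkeeping: you switch between the single-deviator and multi-deviator regimes based on ``the number of players who have played a price below $1$ in the past.'' Read literally, once the punishers themselves start drawing from $P^{\text{threat}}$ (which is supported below $1$), the count of such players jumps to at least two, the profile reverts to pricing at $1$, and the punishment self-terminates after one round---after which a lone deviator can undercut with impunity, destroying the equilibrium. You need the count to track deviations \emph{from the prescribed strategy} (so punishers executing the threat are not counted), or, as the paper does, condition only on the composition of the \emph{first} round in which anyone prices below $1$. This is an easily repaired imprecision rather than a flaw in the approach, but the construction as literally written does not work.
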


 The proof of Proposition \ref{prop:mnoregretconstruction} follows from the coarse-correlated equilibrium construction coupled with constructing the other agents so that they choose a price of $1$ at every round. Our regret bound is on the order of $o_{\min(T,K)}(1)$ rather than $o_{T}(1)$ because we must account for the regret incurred by discretizing continuous distributions to the $\frac{1}{K}$ grid. We defer the proof to Appendix \ref{app:multiUB}. 

Taken together, Proposition \ref{prop:mnoregretconstruction} and Theorem \ref{thm:mnoregret} show a tight bound of $\Theta(\frac{M}{e^{M-1}})$ on the worst-case market price with $M$ no-regret defectors. This demonstrates that the presence of sufficiently many agents defecting to no-regret strategies (i.e., sufficiently many humans who manually perform the pricing task) guarantees exponentially low prices. Moreover, this market price reduction occurs even if the defectors are not in the median-profit set and even if the other players do not start off at an equilibrium. \footnote{Since Theorem \ref{thm:mnoregret} does not require a median-profit assumption, it is weaker than the bound in Theorem \ref{thm:limit} in the case of $M = 1$. Nonetheless, as long as $M = \Omega(\log N)$, the bound is smaller than that of Theorem \ref{thm:limit}.}

\section{Extensions}\label{sec:extensions}

In this section, we consider model extensions to a weaker notion of equilibrium (Section \ref{sec:adoption}) and to agents that can adapt their strategies at test-time (Section \ref{sec:override-aware}).

\subsection{Adoption equilibrium}\label{sec:adoption}

We relax the assumption that player strategies form a repeated-game equilibrium. Specifically, suppose that each player is only guaranteed to be incentivized to choose $S_i^*$ over no-regret defections $S_i^r$: i.e., for each player $i \in [N]$, it must hold that $U_{i}(S_i^*, S_{-i}^*) \ge U_{i}(S^r_i, S_{-i}^*) - \epsilon$ for any no-regret strategy $S^r_i$, but not necessarily other strategies. We call this equilibrium concept an \textit{adoption equilibrium}. At a high level, a repeated game equilibria captures AI agents being trained to play against each other (Appendix \ref{subsec:RLagents}), whereas the adoption equilibrium only guarantees that humans prefer to adopt the AI agent over setting prices manually. Interestingly, Theorem \ref{thm:limit} and Theorem \ref{thm:robust} directly generalize to adoption equilibria, since the only fact that we used about equilibria is that the players prefer their equilibrium strategy to their payoff from a no-regret defection. 

\subsection{Defection-Aware AI Agents}\label{sec:override-aware}

While we have thus far assumed that AI agents are committed to pre-existing equilibrium strategies, we now allow AI agents to adapt their strategies at test-time, fully aware of the human's presence. We envision that this test-time awareness may arise as AI agents become more capable of dynamically shifting their long-run policies in response to changes in the test-time environment (Appendix \ref{subsec:RLagents} and Appendix \ref{subsec:LLMagents}). To model this, we modify the equilibrium assumption for the AI agents: now, we assume that after defections take place, the AI agents form a new equilibrium with awareness of which players defected and what strategies they chose. 

We formally define a \textit{defection-aware} equilibrium below:
\begin{definition}
\label{def:defectionaware}
Let $N \ge 2$, and let $i \in [N]$ be the identity of the defector. Let $S_i$ be any strategy. A strategy profile $(S^*_1, \ldots, S^*_{i-1}, S^*_{i+1}, \ldots, S^*_N)$  is a \textbf{defection-aware repeated-game equilibrium with respect to $(i, S_i)$} if it is a Nash equilibrium in strategy space in the $N-1$ player game between players in $J = [N]\setminus \left\{i \right\}$, with utility functions given by: 
\[\tilde{U}_j(S_j, S_{J \setminus \left\{j\right\}}) := U_j(S_j, S_{[N] \setminus \left\{j\right\}}).\]
We also consider $\epsilon$-relaxations of this equilibrium concept. 
\end{definition}

When the AI agents can form their equilibria given the context of the human's defection, \textit{high market prices can be sustained even in the presence of a no-regret defection}. The following theorems establish that the market price can be close to $1$ regardless of the number of players $N$. The key distinction is that defection-aware AI agents are not constrained to play in such a way that guarantees the no-regret defector a low utility, as they are not tied to playing the off-path sequence of an equilibrium. 

\begin{theorem}\label{thm:override-aware}
Let $N\ge 2$, $K>1$, and $T\ge 1$. Fix a designated player $i\in[N]$ who runs a strategy $S^r_i$ with regret $r(T)$. Then there exists an $O(1/T)$--approximate defection-aware repeated-game equilibrium $\mathcal{S}_{-i}$ with respect to $(i, S^r_i)$ (Definition \ref{def:defectionaware}) such that
\[
\smash{
\Price\!\big(\mathcal{S}_{-i},\,S^r_i\big)\;\;\ge\;\;1-\frac{1}{K}\;-\;\frac{r(T)}{T}.}
\]
\end{theorem}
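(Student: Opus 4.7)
The plan is to construct $\mathcal{S}_{-i}$ as a grim-trigger strategy profile that pins the $N-1$ AI agents at price $1$ on the equilibrium path, and then invoke the defector's no-regret guarantee to lower-bound the market price. The core identity is that, with all AI agents at $1$, the per-round market price equals the defector's own price, which no-regret forces close to $1 - 1/K$.

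For $N \ge 3$, I would have each AI agent play the strategy $S^\star$: price at $1$ every round, and switch to $0$ forever as soon as any AI agent has ever priced below $1$. On path no AI agent deviates, so the round-$t$ market price equals $\min(1, p_{\mathrm{def},t}) = p_{\mathrm{def},t}$. The defector's best fixed action against $N-1$ agents at $1$ is $p = 1 - 1/K$ (winning every round for total utility $(1 - 1/K)T$), so by no-regret the defector's total utility is at least $(1 - 1/K)T - r(T)$. Since at every round the defender's utility is at most its own price (it earns $p_{\mathrm{def},t}$ when winning alone, and $1/N \le 1 = p_{\mathrm{def},t}$ when tying at $1$), summing gives $\sum_t \mathbb{E}[p_{\mathrm{def},t}] \ge (1 - 1/K)T - r(T)$, and dividing by $T$ yields the stated bound. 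For the equilibrium check, each AI's on-path total utility is only $O(r(T)/N)$ (the AI profits only on rounds where the defector ties at $1$, which the no-regret analysis limits to $O(r(T))$ rounds), while any single-round deviation to some $p < 1$ yields at most $1 - 1/K$ in that round and immediately triggers the punishment that drives the deviator's future utility to $0$. The total deviation gain is thus at most $1$, which is $O(1/T)$ per round.

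The main obstacle is the case $N = 2$: with a single AI agent, the trigger condition in $S^\star$ collapses to the agent's own behavior and $S^\star$ degenerates to ``always price at $1$'', but the one-player ``equilibrium'' in Definition~\ref{def:defectionaware} demands an approximate best response to $S^r_i$, which the degenerate $S^\star$ does not satisfy --- the AI can attain utility $\approx T/2$ by threatening the defector directly (``price $1$ until the defector ever prices below $1$, then $0$ forever''). For $N = 2$ I would use this defender-threat variant; the market bound still holds because, by no-regret, the defender ties at $1$ in almost every round, keeping the market at $\approx 1$. The delicate step is verifying the variant is an $O(1/T)$-approximate best response, which reduces to proving that the maximum AI utility against any no-regret $S^r_i$ is at most $T/2 + O(r(T) + 1)$. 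I would obtain this by combining the budget identity $\Price = U_{\mathrm{AI}} + U_{\mathrm{def}} \le T$, the no-regret lower bound $U_{\mathrm{def}} \ge v(S_{\mathrm{AI}}) - r(T)$ on the defender's best-fixed-action revenue $v(S_{\mathrm{AI}})$ against the AI's price distribution, and a Bertrand revenue-curve estimate showing that any AI price distribution whose realized utility exceeds $T/2$ must place enough mass near $1$ to force $v(S_{\mathrm{AI}}) \ge T/2 - O(1)$.
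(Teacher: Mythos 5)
For $N \ge 3$ your construction and analysis coincide with the paper's proof: a grim trigger among the $N-1$ AI agents whose punishment condition ignores the defector, so that on path all AI agents sit at price $1$, the per-round market price equals the defector's price, and the no-regret guarantee against the fixed benchmark $1-\frac1K$ gives the stated bound; the equilibrium check via the at-most-$1$ total deviation gain is also the paper's. You also deserve credit for flagging $N=2$ as genuinely delicate: with $|J|=1$ there is no second AI agent to carry out the punishment, so the lone AI agent is effectively unconstrained in Definition \ref{def:defectionaware}, and it \emph{can} profitably deviate --- to an ERD-based strategy earning $\Omega_K(T)$ (this is exactly the Theorem 4.7 citation the paper itself uses in Appendix \ref{subsec:welfareoverride}) --- whereas ``always price $1$'' earns only $O(r(T))$. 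The paper's own proof does not actually address this case; its punishment clause is vacuous when $J$ is a singleton.

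However, your $N=2$ branch is broken in two ways. First, the claimed deviation payoff of $\approx T/2$ from ``price $1$ until the defector undercuts, then $0$ forever'' is wrong: a no-regret defector cannot keep tying at $1$, since tying earns $\frac12$ per round against a hindsight benchmark of $1-\frac1K$ per round, which is linear regret; so the defector must undercut within $O(r(T))$ rounds, and this threat earns the AI only $O(r(T))$, not $T/2$. Second, and fatally for the theorem's conclusion, your proposed $N=2$ equilibrium \emph{is} that threat strategy, and the same calculation shows the market collapses rather than staying near $1$: once the defector undercuts --- which, as just argued, no-regret forces within $O(r(T))$ rounds --- the AI prices $0$ forever and the per-round minimum price is $0$, so $\Price(\cdot) = O(r(T)/T)$ instead of $1-\frac1K-\frac{r(T)}{T}$. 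Your assertion that ``by no-regret, the defender ties at $1$ in almost every round'' is exactly backwards, and the concluding budget-identity argument, being built on the false $T/2$ benchmark, repairs neither the equilibrium check nor the price bound. As written, your proof is complete only for $N\ge 3$; covering $N=2$ would require a construction in which the lone AI agent's prescribed strategy is simultaneously a near-best response to $S^r_i$ and keeps both prices near $1$, which neither you nor the paper supplies.
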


Theorem~\ref{thm:override-aware} gives asympotically tight upper and lower bounds for the highest possible price, as the highest price is trivially $1$. We analyze how welfare is distributed across players in Appendix \ref{subsec:welfareoverride}.

 \section{Discussion}
 \label{sec:discussion}

In this work, we investigate algorithmic collusion in human-AI ecosystems, where some humans delegate their pricing decisions to AI agents and others opt to manually perform the pricing task themselves. To formalize this, we study a repeated Bertrand pricing game where we model AI agents as choosing strategies forming a Nash equilibrium in the repeated game and where we model human defections as no-regret strategies. Our main finding is that a single human defection drives the price to $\Theta(\ln N / N)$, as long as the player originally obtains at most the median profit, and we further show that this bound is tight. We also characterize the market price under multiple defectors and defection-aware AI agents. 

At a conceptual level, our work offers a theoretical perspective on how algorithmic collusion in human-AI pricing ecosystems is fundamentally different than in ecosystems with only AI agents. Our results highlight the fragility of algorithmic collusion to even a single human not adopting the AI agent, providing theoretical support for empirical observations \citep{Werner2024, SchauerSchnurr2023,normann2022human}. This suggests that multi-agent evaluations with only AI agents can fail to capture emergent behaviors in real-world ecosystems, which rarely exhibit full adoption, and highlights the importance of accounting for human agents in evaluations \citep{carroll2020utilitylearninghumanshumanai}.  

\paragraph{Limitations.} Our stylized model makes several simplifying assumptions for analytic tractability, and we view relaxing these assumptions as an important direction for future work. For example, we assume that AI agents play equilibrium strategies in the repeated game, motivated by how standard ML pipelines encourage agents to optimize for long-run objectives during RL training (Appendix \ref{subsec:RLagents}) or LLM prompt specification (Appendix \ref{subsec:LLMagents}). We showed that our results generalize readily to the weaker notion of adoption-aware equilibria (Section \ref{sec:adoption}) and that our results for multiple defections require no assumptions on the AI agents' strategies (Section \ref{subsec:mnoregret}), but our model still abstracts away many of the details of model training. It would be interesting to specialize our model to a specific type of AI agent and more closely capture the strategies learned by that agent. Finally, our pricing game assumes homogeneous goods and non-noisy consumer decisions; it would be interesting to incorporate smoother demand curves as well as heterogeneous goods and costs.

\section{Acknowledgments}
We would like to thank Sara Fish and Aaron Roth for useful feedback on this project. NC was supported by a grant from the Simons Foundation and a grant from the UK AI Safety Institute (AISI). MJ was supported by a Stanford AI Lab postdoctoral fellowship.

\bibliography{refs}
\bibliographystyle{plainnat}

\newpage

\appendix

\section{Real-world examples}\label{subsec:examples}

We discuss how our model captures key aspects of pricing ecosystems with RL agents as well as pricing ecosystems with LLM agents. In both examples, we model defections as humans playing no-regret strategies, motivated by how this behavioral assumption is used in auctions \citep{nekipelov2015econometricslearningagents}. 

\subsection{Connection to RL agents}\label{subsec:RLagents} 

Consider an ecosystem where sellers deploy RL agents to make pricing decisions on their behalf \citep{calvano2020artificial}. 

The strategy $S$ in our model captures the policy that the RL agent learns during the training. An RL agent's policy produces different outputs depending on the world state, which is captured by the output of the strategy $S$ being adaptive to the actions taken by other players. We envision that the RL agent is trained to perform well under multi-agent interactions, for example via self-play \citep{silver2017mastering}. 

Our model takes an idealized and simplified view of these training dynamics: we assume that RL agents implement a Nash equilibria in the repeated game, motivated by how training approaches based on self-play converge to a Nash equilibrium in idealized finite two-player zero-sum games. An RL agent's policy is typically fixed after training, which is captured by the player being unable to change its strategy $S$ at test-time even if other players perform defections. 

We note that some RL training approaches do account for human behavior during training \citep{carroll2020utilitylearninghumanshumanai, cicero2022science}; we view these RL agents as exhibiting greater robustness to human defections, though these agents are still unlikely to exhibit full defection-awareness (Section \ref{sec:override-aware}) since the policy is still fixed at test-time.

\subsection{Connection to LLM agents}\label{subsec:LLMagents}

Suppose that sellers deploy LLM agents to make pricing decisions on their behalf. These sellers can influence the LLM agent's behavior through prompt engineering. We view the strategy $S$ in our model as capturing the ``policy'' that the LLM agent implements for a given choice of prompt. Here, the policy captures the output of the LLM agent in response to the history of prior play (which we envision is captured in its context). 

Recent work suggests that when LLMs are prompted to optimize for a long-run objective, the interactions between LLMs can lead to high prices in pricing games \citep{fish2025algorithmiccollusionlargelanguage}, and that these agents can increasingly implement long-run strategic reasoning in repeated games \citep{zhang2024llmmastermindsurveystrategic}. 
An LLM agent's policy is fixed at test-time, which is captured by the player being unable to change its strategy in response to defections. However, 
as LLMs become increasingly capable, it is possible that these agents will implement policies that detect and adapt to defections, as captured by the defection-aware model in Section \ref{sec:override-aware}.

\section{Preliminaries on Equal Revenue Distributions}
\label{sec:appendix_erd_prelim}

Many of our proofs rely on constructions using the Discretized Equal Revenue Distribution (ERD), which we formally define here. This is a discretization version of the continuous equal-revenue distribution. The continuous ERD with parameter $c \in (0,1)$ is supported on $[c, 1)$ and has cumulative distribution function given by:
\[ 
\begin{cases}
0 & \text{ if } x < c \\
1 - \frac{c}{x} & \text{ if } x \in [c, 1) \\
1 & \text{ if } x = 1.
\end{cases}
\]
The key property of this distribution, which is inherited by the discretized ERD with parameter $c$, is that it equalizes the expected revenue for any price $p \ge c$. The expected value of a price drawn from an ERD is closely related to the harmonic series and is on the order of $\Omega(c \log(1/c))$.

\begin{definition}[Discretized Equal Revenue Distribution (DERD) $P_{c,K}$]\label{def:disc_ERC}

Let $c = m/K$ for some integer $m \in \{1, \dots, K-1\}$. The discretized equal-revenue distribution (ERD) $P_{c,K}$ over the price grid $\{0, 1/K, \dots, 1\}$ is defined by the probability mass function:
\[
    P_{c,K}(p_i) = 
    \begin{cases}
        c & \text{if } i=K \\
        c \left( \frac{1}{p_i} - \frac{1}{p_{i+1}} \right) & \text{if } i \in \{m, \dots, K-1\} \\
        0 & \text{otherwise}
    \end{cases}
\]
where $p_i = i/K$. This distribution ensures that the expected utility $\mathbb{E}_{X \sim P_{c,K}(p_i)}[p \cdot 1[X \ge p]]$, where ties are broken in favor of this player, is exactly $c$ for any price $p \ge c$. 
\end{definition}

\begin{remark}
$P_{c,K}$ does not guarantee utility exactly $c$ in our model, as by contrast to the tiebreaking assumption above, we employ standard uniform tiebreaking. However, the utility difference is on the order of $\frac{1}{K}$ for all prices $< 1$ and is absorbed into our $\frac{1}{K}$ error terms. 
\end{remark}

\paragraph{Harmonic Numbers and DERD Expectation.}  
For the discretized equal revenue distribution $P_{c,K}$ supported on grid points $\{m/K,\dots,1\}$ with $m=\lceil cK\rceil$, the exact expectation can be expressed in terms of harmonic numbers:
\[
\mathbb{E}[P_{c,K}] \;=\; c\big(H_K - H_m + 1\big),
\]
where $H_n=\sum_{j=1}^n 1/j$. Using the standard bounds $\ln\!\tfrac{K+1}{m+1}\le H_K-H_m\le \ln\!\tfrac{K}{m}$, we obtain
\[
c\!\left(\ln\!\tfrac{K+1}{m+1}+1\right) \;\le\; \mathbb{E}[P_{c,K}] \;\le\; c\!\left(\ln\!\tfrac{K}{m}+1\right).
\]
In particular, when $m=cK$, this shows $\mathbb{E}[P_{c,K}] = c(\ln(1/c)+1)\pm O(1/K)$, matching the continuous ERD expectation to within $\frac{1}{K}$.

\section{Omitted Details from Section~\ref{sec:model}}\label{app:model}
\paragraph{Formalization of history.} More formally, let the history 
\[\mathcal{H}_{i,t} = \left\{P_{i, t'} \mid i \in [N], 1 \le t' \le t \right\} \cup \left\{ u_i(p, P_{-i, t'}) \mid i \in [N], 1 \le t' \le t, p \in \left\{0, 1/K, \ldots, 1\right\} \right\}\] denote the price distributions and expected payoffs up to and including the time step $t$, and let $\mathcal{H}$ be the set of all possible histories at any time step. A strategy $S: \mathcal{H} \rightarrow \Delta(\left\{0, 1/K, \ldots, 1\right\})$ maps histories to price distributions.  Let $\mathcal{S}(T)$ denote the set of all possible strategies operating over $T$ rounds. Let $S_i \in \mathcal{S}(T)$ be player's $i$'s strategy, which means that they choose price distribution $P_{i,t} = S_i(\mathcal{H}_{i, t-1}) $ at each round $t$.

\paragraph{Rationale for considering approximate equilibria.} Our rationale for considering approximate equilibria is to make the model faithful to the strategic phenomena we wish to capture. In finitely repeated games, the set of exact equilibria is artificially narrow: by backward induction, collusion collapses even in settings where empirical evidence and simulations show that high prices can persist. By instead focusing on $\Theta(1/T)$-approximate equilibria, we capture the natural regime for long-horizon learning: when $T$ is large, small deviations are negligible relative to long-run rewards, just as in learning algorithms where regret guarantees vanish over time.

\section{Omitted Proofs from Section~\ref{sec:noregret}} 

\subsection{Proof of Proposition \ref{prop:warmup}}\label{appendix:proofwarmup}

We prove Proposition \ref{prop:warmup}.

\begin{proof}[Proof of Proposition \ref{prop:warmup}]

\noindent \textbf{Equilibrium profile.}
Let $\mathcal P=\{0,\tfrac1K,\tfrac2K,\dots,1\}$ and define
\[
q^*=\left\lfloor \tfrac{1}{N} \right\rfloor_{K}= \max\{x\in\mathcal P: x\le\tfrac1N\}
\]
Since the grid spacing is $1/K$, we have that
\[
q^*\ge\tfrac1N-\tfrac1K
\]

Now let $S^*$ be the profile where every player
\[
\begin{cases}
\text{posts price }1&\text{as long as nobody has ever priced below $1$},\\
\text{posts price }q^*&\text{if any player has priced below $1$.}
\end{cases}
\]
While everyone plays $1$, they tie and each get $\tfrac1N$ per round. From the round following the first deviation onwards, the deviator either prices above $q^*$ (payoff $0 < 1/N$), ties at $q^*$ (payoff $q^*/N\le1/N^2 < 1/N$), or undercuts to at most $q^* - \frac{1}{K}<1/N$. Thus, if the first time a player prices below $1$ is round $t \leq T$, the time-averaged utility after deviation is at most \[\underbrace{\frac{(t-1) \frac{1}{N}}{T}}_{\text{before deviation}} + \underbrace{\frac{1}{T}}_{\text{round of deviation}} + \underbrace{\frac{(T - t) \frac{1}{N}}{T}}_{\text{after deviation}} \leq \frac{1}{N} + \frac{1}{T}\] Therefore no deviation can improve on the payoff for profile $S^*$ by more than $\frac{1}{T}$, so $S^*$ is a $\frac{1}{T}$-approximate equilibrium. 

\medskip
\textbf{Defection by a no‐regret learner.}
Suppose player $i$ defects to a no‑regret strategy $S^r_i$ with regret bound $r(T)$, while others continue playing $S^*_{-i}$.  Then
\[
\Price(S^r_i,S^*_{-i})
\;\ge\;U_i(S^r_i,S^*_{-i}),
\]
so it suffices to lower‑bound player $i$’s utility.

First, we will consider the case where $2N > K$. In this case, $\frac{1}{N} - \frac{2}{K} - \frac{r(T)}{T} \leq \frac{2}{K} - \frac{2}{K} - \frac{r(T)}{T} \leq 0$. Since the price cannot go below $0$, the condition is trivially satisfied.

For the remainder of this proof, consider the case where $K \geq 2N$. Note that this implies that $q^* \geq \frac{1}{2N} > 0$, and thus is a valid price to play.

Consider the transcript of play for the strategy profile $(S^r_i,S^*_{-i})$. Players $j \neq i$ will always either simultaneously play $1$ or simultaneously play $q^*$.\footnote{As the no-regret strategy is randomized, we cannot (and do not need to) know at what round the no-regret player first defects.} For player $i$, consider the fixed benchmark strategy “always price at $q^{*} - \frac{1}{K}$” played against this fixed transcript of opponent prices. On all rounds, pricing at $q^* - \frac{1}{K}$ undercuts them and wins, yielding payoff exactly $q^* - \frac{1}{K}$ each round. The no‑regret guarantee then gives
\[
\Price(S^r_i,S^*_{-i})
\;\ge\; U_i(S^r_i,S^*_{-i})
\;\ge\;
(q^* - \frac{1}{K})\cdot T \;-\; r(T).
\]
Since $q^*\ge\tfrac1N-\tfrac1K$, dividing by $T$ yields
\[
\frac{\Price(S^r_i,S^*_{-i})}{T}
\;\ge\;
\frac1N-\frac2K \;-\;\frac{r(T)}T,
\]
as claimed.
\end{proof}

\subsection{Proof of Theorem \ref{thm:robust}}\label{appendix:proofoptimalconstruction}

To prove Theorem \ref{thm:robust}, we first show the following lemma, which upper bounds the number of rounds where a learner does not best respond in a pricing game in terms of the learner's regret. This lemma follows from the regret bound combined with an assumption on the gap. 
\begin{lemma}[Few non–best‑response rounds]\label{lem:few_bad}
Consider a two-player repeated pricing game with a learner and an opponent. The learner plays a no-regret strategy with external regret $r(T)$ and the opponent chooses i.i.d.\ prices $X_t\sim\mathcal D$ each round on the grid
\(\mathcal P=\{0,\frac1K,\dots,1\}\).
For any grid price \(p\) define the one‑round \emph{Bertrand payoff}
\[
u(p,\mathcal{D})
=\;p\,\Pr_{X \sim \mathcal D} [X>p]\;+\;\frac p2\,\Pr_{X \sim \mathcal D}[X=p].
\]
Assume there is a unique best price \(p^{\star}\in\mathcal P\) and a gap
\(\delta>0\) such that \(u(p^{\star},\mathcal{D})\ge u(p,\mathcal{D})+\delta\) for every \(p\neq p^{\star}\).
Let the learner possibly play mixed strategies \(P_t\) over \(\mathcal P\) in each round \(t\). Define
\[
\mathcal B\;:=\;\sum_{t=1}^T\bigl(1- P_t(p^{\star})\bigr)
\]
to be the total probability mass assigned to non–best-response prices across the \(T\) rounds. Then
\[
\mathcal B\;\le\;\frac{r(T)}{\delta}.
\]
\end{lemma}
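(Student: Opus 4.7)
The plan is to apply the no-regret guarantee against the constant benchmark strategy of always playing the unique optimal price $p^{\star}$, and then use the gap condition to convert the resulting regret inequality into a bound on the total mass placed on suboptimal prices. Since the opponent's prices $X_t$ are i.i.d.\ draws from $\mathcal{D}$, each round's expected Bertrand payoff for any pure price $p$ equals $u(p,\mathcal{D})$; taking expectations over the opponent's randomness in the standard external regret inequality gives
\[
\sum_{t=1}^T \mathbb{E}_{p\sim P_t}\!\bigl[u(p,\mathcal{D})\bigr]\;\ge\;T\cdot u(p^{\star},\mathcal{D})\;-\;r(T).
\]

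The next step is to bound each round's expected payoff in terms of the non–best-response mass $1-P_t(p^{\star})$. Decomposing $P_t$ over $p^{\star}$ and the rest and applying the gap hypothesis $u(p,\mathcal{D})\le u(p^{\star},\mathcal{D})-\delta$ for $p\neq p^{\star}$ yields
\[
\mathbb{E}_{p\sim P_t}\!\bigl[u(p,\mathcal{D})\bigr]\;\le\;u(p^{\star},\mathcal{D})\;-\;\delta\bigl(1-P_t(p^{\star})\bigr).
\]
Summing this over $t$ and combining with the regret inequality above gives
\[
T\,u(p^{\star},\mathcal{D})\;-\;\delta\sum_{t=1}^T\bigl(1-P_t(p^{\star})\bigr)\;\ge\;T\,u(p^{\star},\mathcal{D})\;-\;r(T),
\]
from which the claimed bound $\mathcal{B}\le r(T)/\delta$ follows after cancellation and division by $\delta>0$.

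\textbf{Main subtlety.} The only real thing to be careful about is the interface between the adversarial regret guarantee (which holds pathwise against any realized sequence of opponent prices) and the distributional statement here. Since the bound $\max_p\sum_t u(p,X_t)-\sum_t u(P_t,X_t)\le r(T)$ holds for every realization of $X_{1:T}$ and of the learner's internal randomness, taking expectations over both and applying Jensen's inequality to pull $\max_p$ inside (bounding it below by $\max_p\mathbb{E}[\sum_t u(p,X_t)]=T\,u(p^{\star},\mathcal{D})$) justifies the first displayed inequality. Once that interpretation is fixed, the rest is a two-line computation with no further obstacles.
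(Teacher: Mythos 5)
Your proposal is correct and follows essentially the same argument as the paper's proof: apply the external regret guarantee against the fixed benchmark $p^{\star}$, use the $\delta$-gap to lower bound the per-round loss by $\delta\bigl(1-P_t(p^{\star})\bigr)$, and sum. Your extra remark on passing from the pathwise regret guarantee to the expectation over $\mathcal{D}$ is a reasonable clarification of a step the paper leaves implicit, but it does not change the route.
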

\begin{proof}
Let \(u(P_t, \mathcal{D}):=\sum_{p\in\mathcal P}P_t(p)\,u(p, \mathcal{D})\) denote the expected utility under the mixed action \(P_t\). External regret implies
\[\sum_{t=1}^T u(P_t,\mathcal{D})\;\ge\;\sum_{t=1}^T u(p^{\star},\mathcal{D})\;- r(T).\]
For any round \(t\), by the \(\delta\)-gap assumption,
\[u(p^{\star},\mathcal{D})-u(P_t,\mathcal{D})\;=\;\sum_{p\in\mathcal P}P_t(p)\,\bigl(u(p^{\star},\mathcal{D})-u(p,\mathcal{D})\bigr)\;\ge\;\delta\,\bigl(1-P_t(p^{\star})\bigr).\]
Summing over \(t\) and rearranging yields \(\delta\,\mathcal B\le r(T)\), hence \(\mathcal B\le r(T)/\delta\).
\end{proof}

Now, we are ready to prove Theorem \ref{thm:robust}.

\begin{proof}[Proof of Theorem \ref{thm:robust}]

\noindent \textbf{Tweaked punishment distribution.}
Let $P_{c,K}$ be the discretized ERD as defined in Definition~\ref{def:disc_ERC}.

Create
\[
P^{\textit{high}}_{c,K,\varepsilon}\{1\}=(1-\varepsilon)P_{c,K}\{1\}+\varepsilon, \\ \qquad
P^{\textit{high}}_{c,K,\varepsilon}\{x\}=(1-\varepsilon)P_{c,K}\{x\}\; (\textit{if }x<1).
\]

As in Lemma \ref{lem:few_bad}, let \[
u(p)
=\;p\,\Pr_{X \sim \mathcal D} [X>p]\;+\;\frac p2\,\Pr_{X \sim \mathcal D}[X=p].
\]
The extra $\varepsilon$ on price 1 ensures that 
\[
p^{\star}:=1-\tfrac1K
\]
is the unique best price for $u$ in $\mathcal{P} = \left\{0, 1/K, \ldots, 1 \right\}$ with respect to this distribution. 
After this adjustment, we can lower bound the gap between playing $p^*$ against $P^{\textit{high}}_{c,K,\varepsilon}$ and playing any other price $p \neq p^*$ against it:
\begin{align*}
   u(p^{\star}, P^{\textit{high}}_{c,K,\varepsilon})-u(p, P^{\textit{high}}_{c,K,\varepsilon})
\;\ge\;
\delta
:= p^{*} \left( (1 - \varepsilon)(c - \frac{1}{K}) + \varepsilon \right) - p \left((1 - \varepsilon)(c + \frac{1}{K}) \right)  \\
\geq \frac{\varepsilon -(1 - \varepsilon)\frac{1}{K}}{2}
\end{align*}
for every $p\neq p^{\star}$.

Now, let $c:= \lfloor\frac{1}{N} - 2 \delta\rfloor_{K}$.

\medskip\noindent
\textbf{Equilibrium construction.}
Match each player \(i\) to a distinct player  $\pi(i)$ \(\pi(i)\neq i\).
Every player $i$ posts price 1 unless \(\pi(i)\) is the \emph{first} deviator, in which case $i$ switches forever to draws from \(P^{\textit{high}}_{c,K,\varepsilon}\).
On the cooperative path every round ties at 1, giving each player \(1/N\).  
A single deviation triggers exactly one punisher’s \(P^{\textit{high}}_{c,K,\varepsilon}\), while every other player continues pricing at $1$; thus, again we can evaluate the prices and utilities of the defecting player and the punishing player as if they are engaged in a $2$-player game. The deviator’s best reply then earns \(u(p^{\star},P^{\textit{high}}_{c,K,\varepsilon})\leq c+\delta\leq 1/N\), except for in the exact round that they deviate. Thus, deviation can only lead to an increase of utility of $\frac{1}{T}$. 

\medskip\noindent
\textbf{Defection by a no‑regret learner.}
Let player $j$ defect to an strategy with regret \(r(T)\).  After the first undercut,
only \(\pi(j)\) draws \(X_t\sim P^{\textit{high}}_{c,K,\varepsilon}\); the other \(N-2\) players continue at $1$.
Because \(X_t\) is independent of $j$’s current action, the distribution of \(X_t\) is i.i.d.\ across rounds. Thus, the price outcome of this game is equivalent to that of a $2$-player repeated Bertrand pricing game where one player plays a no-regret strategy and the other plays $X_{t}$. 

Apply Lemma \ref{lem:few_bad} with the gap \(\delta\):
the number of rounds the no-regret defector chooses \(p_t\neq p^{\star}\) is
\(
|\mathcal B|\le r(T)/\delta.
\)

\medskip\noindent
\textbf{Market price calculation.}
Suppose that all players price at $1$ except one player who prices at $P^{\textit{high}}_{c,K,\varepsilon}$ and one player who prices at $1 - \frac{1}{K}$; in this case, the expected market price is exactly the expected value of $P^{\textit{high}}_{c,K,\varepsilon}$, but moving all mass at price $1$ to price $1 - \frac{1}{K}$. As $P_{c,K}$ is a $K$-discretization of the ERD (Appendix \ref{sec:appendix_erd_prelim}) and since $P^{\textit{high}}_{c,K,\varepsilon}$ stochastically dominates $P_{c,K}$, the expected value of $P^{\textit{high}}_{c,K,\varepsilon}$ is at least $c(\ln(\frac{1}{c}) + 1) - \frac{1}{K}$. 
Thus in a \emph{good} round ($p_t=p^{\star}$), the market price is at least

\begin{align*}
&\mathbb E_{X \sim P^{\textit{high}}_{c,K,\varepsilon}}[\min(p^{\star},X)] = \Pr(p^{*} \geq X) \cdot \mathbb{E}[X|X \leq p^*] + \Pr(p^{*} < X) \cdot p^* \\
&= \Pr(p^{*} \geq X) \cdot \mathbb{E}[X|X \leq p^*] + \Pr(p^{*} < X) \cdot \mathbb{E}[X|X > p^*] + \Pr(p^{*} < X) \cdot p^* - \Pr(p^{*} < X) \cdot \mathbb{E}[X|X > p^*] \\
&= \mathbb{E}[X] + \Pr(p^{*} < X) \left( p^* - \mathbb{E}[X|X > p^*] \right) \\
&= \mathbb{E}[X] + \Pr(p^{*} < X) \left( p^* - 1 \right) \\
&= \mathbb{E}[X] - \Pr(p^{*} < X) \left( \frac{1}{K} \right) \\
&\geq \mathbb{E}[X] -  \frac{1}{K} \\
&\geq c(\ln(\frac{1}{c}) + 1)-\frac2K =:w_{\text{good}}
\end{align*}

In a \emph{bad} round welfare is at least $0$.
Hence
\[
\DefectedPrice(S_i^r, S^*_{-i}) \geq w_{\text{good}}(1 - \tfrac{|\mathcal B|}{T})\]

Substituting \(|\mathcal B|\le r(T)/\delta\) and \(c \geq \tfrac1N-2 \delta - \frac{1}{K}\) gives
\begin{align*}
\DefectedPrice(S_i^r, S^*_{-i}) &\geq (1 - \frac{r(T)}{\delta T}) ((\frac{1}{N} - 2\delta - \frac{1}{K})(\ln(N) + 1)-\frac2K) \\
&\geq  (\frac{1}{N} - 2\delta - \frac{1}{K})(\ln(N) + 1)-\frac2K - O\left(\frac{r(T)}{\delta T} \right)
\end{align*}

Recalling that 
\(\displaystyle
\delta
=\Theta(\varepsilon)
\),
we may write the last term as
\(O\!\bigl(r(T)/T \epsilon\bigr)\).
Thus, we get that
\[
\frac{\Price(S_i^r, S^*_{-i})}{T}
\;\ge\;
\frac{\ln(N)+1}{N}-\frac{3\ln(N)}{K}
-\Theta(\varepsilon) \cdot (\ln(N) + 1)
-O\!\Bigl(\tfrac{r(T)}{T \epsilon}\Bigr).
\]

By setting $\varepsilon = \sqrt{\frac{r(T)}{T(\ln(N) + 1)}}$, we get 
\[
\frac{\Price(S_i^r, S^*_{-i})}{T}
\;\ge\;
\frac{\ln(N)+1}{N}-\frac{3\ln(N)}{K}
-O\!\Bigl(\sqrt{\tfrac{r(T)(\ln(N) + 1)}{T}}\Bigr).
\]

\end{proof}

\subsection{Proof of Proposition \ref{prop:defectionhighprofitnoregret}}

\begin{proof}[Proof of Proposition \ref{prop:defectionhighprofitnoregret}]
    Let $\mathcal{S}^*$ be the equilibrium strategy profile where every player other than $i$ uses the following strategy:
    \[
\begin{cases}
\text{posts price }1 & \text{as long as no player other than $i$ has ever priced below $1$},\\
\text{posts price }0 & \text{if any player other than $i$ has priced below $1$.}
\end{cases}
\]

    Player $i$ always posts price $1 - \tfrac{1}{K}$. 

    This is an $O(1/T)$-approximate equilibrium as long as $N \geq 3$: every player other than $i$ attains utility $0$ regardless of whether they remain in the prescribed strategy or deviate, and player $i$ attains the highest feasible price of $1 - \tfrac{1}{K}$, so there is no unilateral deviation that increases the profit by $\omega(1/T)$. 

    Moreover, the strategies of all players other than $i$ are not responsive to the prices set by player $i$; therefore, when reasoning about the price outcomes under a deviation by player $i$, we can assume all others continue to price at $1$.

    If player $i$ deviates to a no-regret strategy $S^r_{i}$ with regret bound $r(T)$, the optimal fixed price in hindsight against opponents who always price at $1$ remains at $1 - \tfrac{1}{K}$. By the no-regret guarantee, the time-averaged utility of player $i$ satisfies
   
    \[
        U_{i^{*}}(S^r_{i}, S^*_{-i}) \;\ge\; 1 - \frac{1}{K} - \frac{r(T)}{T} - O\!\left(\frac{1}{T}\right),
    \]
    where the $O(1/T)$ term accounts for at most a single transient round.

    Since the utility of a single player lower bounds the market price in each round, the time-averaged market price is at least
    \[
        1 - \frac{1}{K} - \frac{r(T)}{T} - O\!\left(\frac{1}{T}\right).
    \]

\end{proof}

\subsection{Proof of Lemma \ref{lem:equal_looting}}\label{appendix:prooflemma}

The proof of Lemma \ref{lem:equal_looting} uses the no-regret property of the defection strategy $S_i$ to lower bound the utility of player $i$ by the static best-response to the time-averaged distribution of other agents' prices. The remainder of the argument boils down to bounding the market price in a static game $(T = 1)$ in terms of the utility achieved by a best-responding agent. To prove this, we show that in a pricing game, a no-regret strategy is guaranteed a fraction of the total welfare that is independent of the number of players $N$. 

We formalize this as follows.
\begin{proof}[Proof of Lemma \ref{lem:equal_looting}]
   Consider any no-regret strategy $S^r_i$ with regret bound $r(T)$ and any other $N-1$ strategies $S_{-i}$. It suffices to prove the statement for $c := U_{i}(S^r_i, S_{-i}) / T$, the time-averaged utility of player $i$ given this strategy profile. Let $\mathcal{D}$ denote the distribution over $T$-length sequences of $N$-player prices given $(S^r_i, S_{-i})$. Finally, let $\mathcal{D}_{-i}$ denote the distribution over $T$-length sequences of min prices $\min_{j \neq i} P_{i,t}$ of all players except for player $i$ on this same strategy profile, and let $\mathcal{D}^{t}_{-i}$ denote the distribution of prices only on day $t$.

   We split the proof into two steps. First, we show that $\frac{1}{T}\sum_{t=1}^{T} \mathcal{D}_{-i}^t$ is approximately stochastically dominated by an equal-revenue distribution with parameter $c$. Then, we use this to lower bound the market price.
   
 \paragraph{Step 1: Approximate Stochastic Dominance.}
By the regret guarantee of $S_i$, we have
\[
c \cdot T = U_i(S_i^r, S_{-i}) \;\;\ge\;\; \max_{p}\;\sum_{t=1}^T \mathbb{E}[\,u_i(p, P_{-i,t})\,] - r(T),
\]
where $u_i$ is the Bertrand payoff as defined in Section~\ref{sec:model}.
Since $u_i(p, P_{-i,t}) \ge p \cdot \Pr[p < \min_{j\neq i} P_{j,t}] = \Pr[p < D_{-i}^t]$, it follows that
\[
c \cdot T \;\;\ge\;\; \max_{p}\;\sum_{t=1}^T p \cdot \Pr\!\left(p < \mathcal{D}_{-i}^t\right) - r(T).
\]
Thus, for all $p \in \{0,\tfrac{1}{K},\ldots,1\}$,
\[
\frac{1}{T}\sum_{t=1}^T \Pr\!\left(\mathcal{D}_{-i}^t > p\right)
\;\;\le\;\; \frac{c}{p} + \frac{r(T)}{T \cdot p}.
\]
This shows that the empirical mixture distribution $\tfrac{1}{T}\sum_{t=1}^{T}\mathcal{D}_{-i}^t$ is approximately stochastically dominated by an equal-revenue distribution with parameter $c$ (Appendix~\ref{sec:appendix_erd_prelim}).

\paragraph{Step 2: Upper Bound Market Price.} Let $c'$ be the smallest number $\geq c$ such that $Kc'$ is an integer. Thus, $c \leq c' \leq c + \frac{1}{K}$. We can now upper bound the cumulative market price by

\begin{align*}
   &T \cdot  \Price(S^r_i, S_{-i}) \\
   & = \mathbb{E}[\sum_{t=1}^{T}\min_{j}(P_{j,t})] \\
    & \leq \mathbb{E}[\sum_{t=1}^{T}\min_{j \neq i}(P_{j,t})] \\
    &=  \sum_{t=1}^T \mathbb{E}[\mathcal{D}^t_{-i}] \\ &
\leq \sum_{t=1}^T \sum_{j=1}^K 
    \Pr\!\Bigl(\mathcal{D}^t_{-i} > \tfrac{j}{K}\Bigr)\,\frac{1}{K} \\
    &= \sum_{j=K \cdot c'}^K 
    \sum_{t=1}^T \Pr\!\Bigl(\mathcal{D}^t_{-i} > \tfrac{j}{K}\Bigr)\,\frac{1}{K} + \sum_{t=1}^T \sum_{j=1}^{K \cdot c' - 1} 
    \Pr\!\Bigl(\mathcal{D}^t_{-i} > \tfrac{j}{K}\Bigr)\,\frac{1}{K} \\[1ex]
&\le \sum_{j=K \cdot c'}^K \,\frac{K\bigl(c \cdot T + r(T)\bigr)}{j}\frac{1}{K} + \sum_{t=1}^T \sum_{j=1}^{K \cdot c' - 1} 
    \Pr\!\Bigl(\mathcal{D}^t_{-i} > \tfrac{j}{K}\Bigr)\,\frac{1}{K} \tag{By the result above} \\ &
    \le \sum_{j=K \cdot c'}^K \frac{c \cdot T + r(T)}{j} + \sum_{t=1}^T \sum_{j=1}^{K \cdot c' - 1} 
  \frac{1}{K}  \\ &
      \leq \left( c \cdot T + r(T) \right)\left(\sum_{j=K \cdot c'}^K\frac{1}{j}\right) + c' \cdot T \\ 
      &
      \leq \left( c \cdot T + r(T) \right)\left(1 + \ln\left(\frac{1}{c'}\right)\right) + c' \cdot T \tag{By the Harmonic-series bound} \\
      & \leq \left( c \cdot T + r(T) \right)\left(1 + \ln\left(\frac{1}{c}\right)\right) + \left(c + \frac{1}{K} \right)\cdot T
\end{align*}

Thus, if the no-regret player attains utility $c \cdot T$ over all rounds, the total expected minimum price is at most $\left( c \cdot T + r(T) \right) (1 + \ln(\frac{1}{c}))+ \left(c + \frac{1}{K}\right) \cdot T $. Dividing all terms by $T$ gives us our result.

\end{proof}

\subsection{Proof of Theorem \ref{thm:limit}}\label{appendix:prooflimit}

We prove Theorem~\ref{thm:limit} using Lemma \ref{lem:equal_looting}. 
\begin{proof}[Proof of Theorem \ref{thm:limit}]

First, note that as the total welfare per round is upper bounded by $1$, any player $i$ in the median profit set must have total welfare in $S^{*}$ upper bounded by $\frac{2T}{N}$. Furthermore, as $S^{*}$ is a $\epsilon$-approximate equilibrium, it must be that if this player $i$ instead played some $r(T)$-regret strategy $S^r_i$ against $S^{*}_{-i}$, they would achieve utility at most $\frac{2T}{N} + \epsilon \cdot T$---otherwise this would violate our equilibrium assumption. Thus, $U_{i}(S^r_i, S^{*}_{-i}) \leq \frac{2T}{N} + \epsilon \cdot T$. Applying Lemma~\ref{lem:equal_looting} with $c = 2/N + \epsilon$, the total average minimum price is upper bounded by \begin{align*}
       \Price(S^r_i, S^{*}_{-i}) &\le \frac{2}{N} + \epsilon + \frac{1}{K} + \left(\frac{2}{N} + \epsilon + \frac{r(T)}{T}\right)(\ln(\frac{1}{\frac{2}{N} + \epsilon}) +1 )\\
       &< \frac{2}{N} + \epsilon + \frac{1}{K} + \left(\frac{2}{N} + \epsilon + \frac{r(T)}{T}\right)(\ln(N) +1 )\\
        &= \epsilon + \frac{1}{K} + \frac{2}{N}(\ln(N) +2 ) + \left(\epsilon + \frac{r(T)}{T}\right)(\ln(N) +1 )\\
        &=  \frac{2\ln(N)+4}{N} + \left(O(\frac{1}{T}) + \frac{r(T)}{T}\right)(\ln(N) +1 ) + O(\frac{1}{T}) + \frac{1}{K} \\
        & =  O\left(\frac{2\ln(N)+4}{N} + \frac{r(T) +1}{T}\ln(N) + \frac{1}{T} + \frac{1}{K}\right) \\
    \end{align*}
    
    We have shown the result for at least one player $i$, completing the proof.
\end{proof}

\subsection{Proof of Theorem \ref{thm:mnoregret}}\label{appendix:thmmnoregret}

The proof of Theorem \ref{thm:mnoregret} again uses Lemma \ref{lem:equal_looting}. 
\begin{proof}[Proof of Theorem \ref{thm:mnoregret}]
Our goal is to upper bound  $\DefectedPrice(I, S,\left\{ S^r_i \right\}_{i \in I})$. For brevity, let us denote this value as $D$ for the remainder of this proof. Let $\hat{\mathcal{S}}$ be the strategy profile corresponding to the players in $I$ defecting to $\left\{S_i^r\right\}_{i \in I}$. As there are at least $M$ no-regret players in this profile, it must be that there is some no-regret player $i$ who has total utility upper-bounded by $\frac{D}{M}$. Thus, $U_{i}(\hat{\mathcal{S}}) \leq \frac{D}{M}$. 
We will consider two cases:
\begin{itemize}[leftmargin=*]
    \item $D \leq \frac{M}{e^{M-1}}$. In this case, we are done. 
    \item $D \geq \frac{M}{e^{M-1}}$. We can apply Lemma~\ref{lem:equal_looting} with $c=D/M$ to upper bound the average price after defection as follows:

\begin{align*}
  D  & \leq \frac{D}{M} + \frac{1}{K} +  \left(\frac{D}{M} + \frac{r(T)}{T}\right)\ln\left(\frac{M}{D}\right) \\
  & \implies  \frac{\frac{M-1}{M} D - \frac{1}{K}}{\frac{D}{M} + \frac{r(T)}{T}} \leq \ln\left(\frac{M}{D}\right) \\
    & \implies  D \leq me^{-\frac{\frac{M-1}{M}D - \frac{1}{K}}{\frac{D}{M} + \frac{r(T)}{T}}} \\
        & \implies  D \leq me^{-\frac{\frac{(M-1)}{M} - \frac{1}{KD}}{\frac{1}{M} + \frac{r(T)}{TD}}}  \\
        & \implies  D \leq me^{-\frac{M-1 - \frac{e^{M-1}}{K}}{1 + \frac{r(T)e^{M-1}}{T}}}, 
\end{align*} 
where the last step uses the fact that $D \ge M / e^{M-1}$.

Let $\varepsilon_{1} = \frac{e^{M-1}}{K}$, and let $\varepsilon_{2} = \frac{r(T)e^{M-1}}{T}$. 
Note that \begin{align*}
\frac{M-1 - \varepsilon_1}{1 + \varepsilon_2}
& \geq  (M-1 - \varepsilon_1)\Bigl(1 - \varepsilon_2\Bigr) \tag{By the fact that $\frac{1}{1 + \varepsilon_{2}} \geq 1 - \varepsilon_2$} \\
& = (M-1) - \varepsilon_2(M-1) - \varepsilon_1 + \varepsilon_1\varepsilon_2   \\
& \geq (M-1) - \varepsilon_2(M-1) - \varepsilon_1 
\end{align*}

Therefore,
\begin{align*}
    me^{-\frac{M-1 - \frac{e^{M-1}}{K}}{1 + \frac{r(T)e^{M-1}}{T}}}  \leq  me^{-(M-1) + \frac{r(T)e^{M-1}}{T}(M-1) + \frac{e^{M-1}}{K}} \\
 =  \frac{M}{e^{M-1}}  \cdot e^{\frac{r(T)e^{M-1}}{T}(M-1) + \frac{e^{M-1}}{K}} \\
  = \frac{M}{e^{M-1}}  \cdot (1 + o_{T}(1) + o_{K}(1))
\end{align*}
\end{itemize}

\end{proof}

\subsection{Proof of Proposition \ref{prop:mnoregretconstruction}}\label{appendix:thmmnoregretconstruction}\label{app:multiUB}

We prove Proposition \ref{prop:mnoregretconstruction}. 

\begin{proof}
\paragraph{Equilibrium profile $S^*$.}
All players follow the same repeated-game strategy $S^*$:
\begin{itemize}[leftmargin=*]
  \item By default, play price $1$ at every round.
  \item If at some round multiple players simultaneously play below $1$ for the first time,
        then keep playing $1$ forever.
  \item If at some round there is just a \emph{single} player who prices below $1$, and in all previous rounds all players priced at $1$, 
        then from the next round onward all players switch to posting price $0$ forever (punishment phase).
\end{itemize}

\paragraph{$S^*$ is an approximate equilibrium.}
Fix a player $i$ and suppose they deviate unilaterally from $S^*$. 
Let $t$ be the (first) round in which $i$ posts a price $<1$ while all other players still post $1$.
If no such $t$ exists, then $i$ receives $\tfrac{1}{N}$ per round exactly as under $S^*$.
If such a $t$ exists with $t \leq T$, then in round $t$ player $i$ can earn at most $1$ (by undercutting), 
and from round $t\!+\!1$ onward the punishment phase is triggered and all players play $0$, 
so $i$ receives zero thereafter.
Hence $i$’s total payoff is at most
\[
(t-1)\cdot \tfrac{1}{N} + 1 \;\;\leq\;\; T \cdot \tfrac{1}{N} + 1,
\]
and their time-averaged payoff is at most $\tfrac{1}{N} + \tfrac{1}{T}$.
By following $S^*$ they receive exactly $\tfrac{1}{N}$ per round, 
so any unilateral deviation improves utility by at most $1/T$.
Thus $S^*$ is a $1/T$--approximate equilibrium.

\paragraph{Coarse-correlated equilibrium construction.}
By \citet{feldman2016correlated}, there exists a distribution $\mathcal H$ on $(0,1]$ such that:
\begin{enumerate}[leftmargin=*]
  \item[(i)] If $M$ players play i.i.d.\ from $\mathcal H$, then every fixed price obtains the same expected utility in the support of $\mathcal{H}$ (so $\mathcal H$ is a coarse correlated equilibrium).
  \item[(ii)] The market price is exactly $\tfrac{M}{e^{M-1}}$.
\end{enumerate}

We now discretize this equilibrium construction. Let $\mathcal{H}^{(K)}$ denote the distribution obtained by drawing $x\sim \mathcal{H}$ 
and replacing it with $\lfloor Kx \rfloor/K$, with the convention that $1$ is mapped to $1-1/K$.
Observe that the expected utilities and the benchmark utilities of all players differ from $\mathcal{H}$ by at most $O(1/K)$. This implies that: 
\begin{itemize}[leftmargin=*]
  \item $\mathcal{H}^{(K)}$ is an $(2/K)$-approximate coarse-correlated equilibrium
  \item the expected market price in a one-round game where all $M$ players play $\mathcal{H}^{(K)}$ is at least $\tfrac{M}{e^{M-1}} - O(1/K)$.
\end{itemize}

Note that $\mathcal{H}^{K}$ has no support on price $1$.

\paragraph{Construction of no-regret defectors.} The no-regret strategies will operate as follows: they will play $\mathcal{H}^{(K)}$each round as long as their regret remains below $\frac{2}{K}$. Otherwise, they will defect to a standard no-regret strategy such as multiplicative weights. 

\paragraph{Plugging defectors into $S^*$.}
Suppose a set $I$ of $M\ge2$ players deviates to the i.i.d.\ strategies $\hat S^r_1=\dots=\hat S^r_M$. 
These strategies have no support at any round on the price $1$. So on the very first deviation round, all defectors post prices $<1$. Thus this is classified as a multi-deviation and the punishment phase is not triggered. 
All non-deviators continue posting $1$ forever. 
The market price is determined solely by the $M$ defectors, whose expected price is at least $\tfrac{M}{e^{M-1}}-O(1/K)$. 

This completes the proof.
\end{proof}

\section{Additional Content from Section \ref{sec:override-aware}}\label{app:override-aware}

\subsection{Proof of Theorem \ref{thm:override-aware}}

We prove Theorem \ref{thm:override-aware}.

\begin{proof}[Proof of Theorem \ref{thm:override-aware}]
\textbf{Equilibrium strategies.} 
Let $J=[N]\setminus\{i\}$. 
For each $j\in J$, define the strategy $S_j^*$ as follows:
\begin{itemize}[leftmargin=*]
  \item At round $t$, if in all previous rounds every player in $J$ (including $j$) has posted price $1$, then play $1$.
  \item Otherwise (i.e., if some player in $J$ has ever posted a price $<1$), then from the next round onward play $0$ forever.
\end{itemize}
The designated player $i$ is ignored in this condition: $i$’s actions never trigger punishment.  
Let $\mathcal{S}_{-i}$ denote the strategy profile of the players in $J$. 

\textbf{$\mathcal{S}_{-i}$ is an approximate defection-aware equilibrium with respect to $i$.}
Fix $j\in J$ and consider a unilateral deviation by $j$, holding the other players in $J$ to $\mathcal{S}_{-i}$ (player $i$ plays $S^r_i$ throughout). 
Let $t$ be the first round in which $j$ posts a price $<1$ while all other players in $J$ still post $1$.  
If no such $t$ occurs, $j$’s average payoff equals their payoff in the equilibrium.  
If such $t\le T$ exists, then in round $t$ the most $j$ can earn is $1$ (by undercutting).  
From round $t+1$ onward, the punishment rule is triggered and all players play $0$, so $j$ earns $0$ thereafter.  
Hence $j$’s total payoff is at most their payoff in the equilibrium plus $1$.
Thus any unilateral deviation by any $j$ gains at most $1/T$ on average, so $\mathcal{S}_{-i}$ is a $1/T$--approximate defection-aware equilibrium with respect to $i$.

\textbf{Market price outcome.}
Along the equilibrium path, all players in $J$ post $1$ in every round (since no player in $J$ deviates).  
By construction, the players in $J$ ignore agent $i$’s play in the trigger condition.  
Thus the per-round market price is simply $\min\{1,\,p_{h,t}\} = p_{h,t}$.  
That is, the market price is fully determined by player $i$'s strategy $S^r_i$.

Since $S^r_i$ is no-regret, its average payoff is within $r(T)/T$ of the best fixed action against opponents at constant price $1$.  
The best fixed price in this environment is $1-\tfrac{1}{K}$ (the largest grid point below $1$), which wins every round.  
Therefore the realized average market price satisfies
\[
\Price\!\big(\mathcal{S}_{-i},\,S_i^r\big)\;\;\ge\;\;1-\frac{1}{K}\;-\;\frac{r(T)}{T}.
\]

\textbf{Conclusion.}
$\mathcal{S}_{-i}$ is an $O(1/T)$--approximate defection-aware equilibrium with respect to $(i, S^r_i)$, and in the presence of the no-regret player $i$, the market price remains bounded below as claimed.
\end{proof}

\subsection{Alternate Construction of Defection-Aware Equilibria with Welfare Guarantees}\label{subsec:welfareoverride}

Since the market price is equal to the social welfare, it is also natural to ask how the welfare is distributed between AI agents and humans. In the equilibrium construction of Theorem~\ref{thm:override-aware}, it turns out that the AI agents cede all of the welfare gain to the human. 
To address this degeneracy, we show in Appendix \ref{subsec:welfareoverride} that there are also defection-aware equilibria that both induce high prices and give the AI agents at least a $\Omega_{K,N}(1)$ share  of those prices. 

\begin{theorem}\label{thm:welfare-override}
Let $N\ge 4$, $K>1$, and $T\ge 1$. Fix a designated player $i\in[N]$ who runs a no-regret strategy $S_i^r$ with regret $r(T)$. Then there exists an $O(1/T)$--approximate defection-aware repeated-game equilibrium $\mathcal{S}_{-i}$ with respect to $(i, S^r_i)$ such that:
\[
\Price(\mathcal{S}_{-i}, S^r_i) \ge 
\frac{1}{T}\sum_{t=1}^T \sum_{j\in [N]\setminus\{i\}} \mathbb{E}[\,u_j(p_{j,t},p_{-j,t})\,]
\;\;\ge\;\;
\Omega_{K,N}(1)
\]
\end{theorem}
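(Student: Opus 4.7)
The plan is to extend the defection-aware construction of Theorem~\ref{thm:override-aware} so that one non-defecting AI captures a constant share of the welfare instead of ceding it all to the no-regret defector. I designate a non-defector $j^{*} \in [N] \setminus \{i\}$ who, along the equilibrium path, plays from the tweaked discretized equal-revenue distribution $P^{\text{high}}_{c,K,\epsilon}$ (from Section~\ref{subsec:optimalconstruction}) in every round, with a fixed constant parameter $c$ (e.g., $c \approx 1/2$). The remaining $N-2$ non-defectors post price $1$ every round. Enforcement is as in Theorem~\ref{thm:override-aware}: if the history ever reveals that a non-defector has used a price distribution different from its prescribed one, then every non-defector switches to posting $0$ forever. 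Since histories record price distributions, such deviations are fully observable, and the assumption $N \ge 4$ guarantees at least two other non-defectors remain to carry out this threat.

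To verify that $\mathcal{S}_{-i}$ is an $O(1/T)$-approximate defection-aware equilibrium, I would check unilateral deviations by each type of non-defector. A non-designated AI earns $0$ per round on the equilibrium path and also earns $0$ under the punishment, so any deviation nets at most $1$ in the deviation round, giving a time-averaged gain of at most $1/T$. The designated AI earns roughly $c \ln(1/c)$ per round on the equilibrium path against the no-regret learner; any deviation can gain at most $1$ in the deviation round but forfeits this payoff in all subsequent rounds, so for sufficiently large $T$ the average gain is again $O(1/T)$.

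For the welfare lower bound, I would apply Lemma~\ref{lem:few_bad} with gap $\delta = \Theta(\epsilon)$ to conclude that the no-regret defector plays the unique best response $p^{*} = 1 - 1/K$ to $P^{\text{high}}_{c,K,\epsilon}$ in all but $r(T)/\delta = o(T)$ rounds. In each such ``good'' round, the designated AI's expected utility equals $\mathbb{E}[X] - p^{*} \Pr(X = p^{*})/2 - \Pr(X = 1) \ge c \ln(1/c) - O(1/K + \epsilon)$, where $X \sim P^{\text{high}}_{c,K,\epsilon}$; this is a positive constant independent of $K$ and $N$. Summing and time-averaging yields $\sum_{j \ne i} U_j \ge \Omega_{K,N}(1)$, and the outer inequality $\Price(\mathcal{S}_{-i}, S^{r}_i) \ge \sum_{j \ne i} U_j$ is immediate because the market price equals the total realized welfare across all players and the defector's utility is nonnegative.

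The main obstacle is verifying the designated AI's equilibrium incentive: a one-round deviation to the deterministic price $p^{*}$ can earn about $(1-1/K)/2 \approx 1/2$, which for some choices of $c$ exceeds the per-round equilibrium payoff $c \ln(1/c)$. The argument therefore relies on the punishment phase strictly cutting off all subsequent utility, making the deviation unprofitable only in the time-averaged sense for large $T$; pinning down the right regime for $\epsilon$ and $c$ so that both the equilibrium slack and the welfare bound remain within $o_T(1)$ error terms is the delicate part. A secondary subtlety is the contribution of the $o(T)$ ``bad'' rounds where the learner does not best respond, which I would simply lower bound by $0$ and absorb into a multiplicative $(1 - o(1))$ factor.
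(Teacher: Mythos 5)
Your construction is correct, and the overall architecture matches the paper's: freeze all but one non-defector at price $1$ via mutual grim triggers, reduce to a two-player interaction between one designated AI and the no-regret defector, and have that AI extract a constant share of welfare with an equal-revenue-type strategy. The two proofs differ in how the designated player's equilibrium condition is secured and where the $\Omega_{K,N}(1)$ bound comes from. The paper designates a \emph{leader} $\ell$ whom the followers never punish; the followers' grim trigger applies only among themselves (this is why $N\ge 4$ is needed, so that $|J\setminus\{\ell\}|\ge 2$), and $\ell$'s incentive constraint is satisfied trivially because $\ell$ is free to play a utility-maximizing strategy in the induced two-player game against the no-regret learner. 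The welfare bound is then obtained by citing Theorem 4.7 of \citet{arunachaleswaran2025threats}, which guarantees that the optimal such strategy (ERD-based) earns $\Omega_K(1)$. You instead pin the designated AI to an explicit strategy $P^{\text{high}}_{c,K,\epsilon}$ and enforce it with the same grim trigger that disciplines the other followers, then derive the welfare bound from first principles via Lemma~\ref{lem:few_bad} and the good-round utility computation, balancing $\epsilon$ against $r(T)/(\delta T)$ exactly as in the proof of Theorem~\ref{thm:robust}. Your route is more self-contained and makes the constant explicit (about $c\ln(1/c)-O(1/K+\epsilon)$), at the cost of redoing the gap/best-response bookkeeping; the paper's route is shorter and sidesteps the designated player's incentive analysis entirely, since a best-responder has no profitable deviation by definition. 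One small note: the incentive slack for $j^*$ is $O(1/T)$ for \emph{all} $T$, not only "sufficiently large $T$" — a deviation at round $t$ gains at most $1$ in that round and forfeits a nonnegative continuation, so the total gain is at most $1$, i.e., $1/T$ time-averaged.
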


Thus, even as $K$ and $N$ increase, the AI agent welfare remains a constant.

\begin{proof}
\paragraph{Equilibrium construction.}
Let $J=[N]\setminus\{i\}$and fix a leader $\ell\in J$.
Each follower $j\in J \setminus\{\ell\}$ plays price $1$ every round unless some other player in $J \setminus\{\ell\}$ deviates from its prescribed strategy; if any player in $J \setminus\{\ell\}$, then from the next round onward all players in other player in $J \setminus\{\ell\}$ play $0$ forever (punishment). The human’s actions never trigger punishment.

Note first that this is an $(1/T)$-approximate equilibrium, regardless of what the $\ell$ does; the followers are mutually threatening each other to continue to price at $1$. Thus, the problem reduces to a $2$-player game in which the human is playing no-regret, and $\ell$ can choose any strategy to maximize their utility. 

By Theorem 4.7 in \cite{arunachaleswaran2025threats}, the utility of $\ell$ in this setting can be $\Omega_{K}(1)$, via an ERD-based strategy. We can lower bound the cumulative utility of all of the players in $J$ by the utility of this player.  
\end{proof}

\end{document}